\documentclass[10pt]{article}
\usepackage[top=2cm, bottom=2cm, left=2cm, right=2cm]{geometry}
\usepackage{amsthm,amsmath,amsfonts,amssymb}
\usepackage[sort]{natbib}
\newtheorem{remark}{Remark}[section]
\newtheorem{definition}{Definition}[section]
\newtheorem{lemma}{Lemma}[section]
\newtheorem{proposition}{Proposition}[section]
\newtheorem{theorem}{Theorem}[section]
\newtheorem{informal}{Informal theorem}[section]

\usepackage{mathtools}
\usepackage{mathrsfs}
\usepackage{stmaryrd}
\usepackage{bm}
\usepackage{bbm}
\usepackage{enumitem}
\usepackage{booktabs}
\usepackage{array}
\usepackage{longtable}
\usepackage{adjustbox}
\usepackage{amsfonts} 

\usepackage{algorithm}
\usepackage{algpseudocode}

\usepackage{graphicx}
\usepackage{subcaption}
\usepackage{tikz}
\usepackage{multirow}
\usepackage{wrapfig}

\usepackage{hyperref}
\definecolor{graphred}{HTML}{B20000}
\definecolor{graphgreen}{HTML}{82B366}
\definecolor{graphblue}{HTML}{6D7BBF}
\definecolor{graphpurple}{HTML}{8E44AD}
\definecolor{lasallegreen}{rgb}{0.03, 0.47, 0.19}
\hypersetup{
  colorlinks=true,
  urlcolor=lasallegreen,
  linkcolor=lasallegreen,
  citecolor=lasallegreen,
  bookmarksdepth=paragraph
}
\usepackage[nameinlink]{cleveref}

\crefname{equation}{}{}
\crefname{section}{section}{sections}
\crefname{figure}{figure}{figures}
\crefname{table}{table}{tables}
\crefname{example}{example}{examples}
\crefname{proposition}{proposition}{propositions}
\Crefname{section}{Section}{Sections}
\Crefname{figure}{Figure}{Figures}
\Crefname{table}{Table}{Tables}
\Crefname{definition}{Definition}{Definitions}
\Crefname{theorem}{Theorem}{Theorems}
\Crefname{remark}{Remark}{Remarks}
\Crefname{example}{Example}{Examples}
\Crefname{proposition}{Proposition}{Propositions}

\newcommand{\eqdef}{\mathrel{\stackrel{\scriptscriptstyle\mathrm{def}}{=}}}

\newcommand{\R}{\mathbb{R}}
\newcommand{\N}{\mathbb{N}}

\newcommand{\m}{\mathtt{m}}
\newcommand{\h}{\mathtt{h}}

\renewcommand{\hat}{\widehat}

\renewcommand{\tilde}{\widetilde}

\renewcommand{\epsilon}{\varepsilon}

\newcommand{\op}{\mathrm{op}}

\usepackage{authblk}

\title{Statistical Guarantees for Reasoning Probes on Looped Boolean Circuits}

\author[1,2]{Anastasis Kratsios}
\author[3]{Giulia Livieri}
\author[4]{A.~Martina Neuman%
\thanks{Corresponding author. Email: \texttt{neumana53@univie.ac.at}}}

\affil[1]{Department of Mathematics, McMaster University, Canada}
\affil[2]{Vector Institute, Canada}
\affil[3]{The London School of Economics and Political Science}
\affil[4]{University of Vienna, Faculty of Mathematics}

\affil[ ]{\vspace{1ex}
\texttt{kratsioa@mcmaster.ca} \quad
\texttt{g.livieri@lse.ac.uk} \quad
\texttt{neumana53@univie.ac.at}
}

\date{}

\usepackage[most]{tcolorbox}
\usepackage{etoolbox}
\usepackage{xcolor}

\definecolor{faintgray}{RGB}{245,245,245}
\definecolor{faintborder}{RGB}{230,230,230}
\definecolor{lightblack}{gray}{0.4}

\usepackage[most]{tcolorbox}
\usepackage{etoolbox} 
\usepackage{xcolor}
\definecolor{faintgray}{RGB}{245,245,245}     
\definecolor{faintborder}{RGB}{230,230,230}   
\definecolor{lightblack}{gray}{0.4}           

\begin{document}

\maketitle

\begin{abstract}
We study the statistical behavior of reasoning probes in a stylized model of iterative computation inspired by neural algorithmic reasoning. The underlying computation is given by a looped Boolean circuit whose graph is a perfect $\nu$-ary tree ($\nu\ge 2$), with outputs recursively fed back as inputs across computation rounds. A probe observes a sampled subset of internal nodes and seeks to infer the latent operation at each node, represented as a probability distribution over a finite set of admissible Boolean gates. This partial observability induces a transductive generalization problem on a structured computation graph.
We show that when the probe is parameterized by a graph convolutional network and queries $N$ nodes, the worst-case generalization error decays at the optimal rate $\mathcal{O}(\sqrt{\log(2/\delta)}/\sqrt{N})$ with probability at least $1-\delta$. Our analysis combines metric embedding techniques with tools from optimal transport. A key insight is that this rate is achievable independently of the size of the computation graph, enabled by a low-distortion one-dimensional snowflake embedding of the induced graph metric. These results highlight a geometric mechanism underlying statistical efficiency in probing structured, iterative computations.
\end{abstract}

\section{Introduction}

Recent advances in machine learning have led to systems that exhibit increasingly sophisticated forms of \emph{multi-step reasoning}, including chain-of-thought prompting, tool use, and iterative self-refinement. Prominent examples include agentic systems~\citep{shinn2023reflexion,wang2024voyager}, tool-augmented models~\citep{lewis2020retrieval,asai2024selfrag}, and iterative self-improvement strategies~\citep{madaan2023self,wang2023selfconsistency,zelikman2022star,chen2025magicore,song2025thinking}. Concurrently, large pretrained language models (such as the GPT family~\citep{brown2020language,achiam2023gpt} and Gemini~\citep{gemini2023}) have demonstrated strong performance on widely used \emph{proxies} for reasoning. 
As such components are embedded within increasingly complex machine learning pipelines, there is growing interest in understanding and explaining how these systems produce their outputs~\citep{bostrom2014superintelligence,bengio2024managing}.
Despite strong empirical progress, a fundamental question remains poorly understood:
\begin{center}
\emph{``To what extent can we recover the internal computational structure of a reasoning process from partial observations?''}
\end{center}
This question arises naturally in interpretability~\cite{DoshiVelezKim2017Interpretability,Lipton2018Mythos,OlahEtAl2020ZoomIn,ElhageEtAl2021TransformerCircuits} and auditing~\cite{MitchellEtAl2019ModelCards,GebruEtAl2021Datasheets,BuolamwiniGebru2018GenderShades,RajiEtAl2020AccountabilityGap}: one may wish to identify latent operations executed during reasoning, or determine whether a model follows a consistent computational strategy across inputs.  In practice, however, internal computation is only partially observed, often with noise or uncertainty.

\paragraph{The neural algorithmic reasoning lens.}
We approach this question through the lens of neural algorithmic reasoning (NAR), popularized by~\cite{velivckovic2021neural}, which interprets the reasoning capabilities of neural networks as arising from their ability to implement discrete computational procedures. Under this perspective, intermediate computations are expressed in an interpretable form, often modeled as Boolean circuits~\cite{jukna2012boolean} composed of elementary operations such as logical gates $\{\operatorname{AND},\operatorname{OR},\operatorname{NOT}\}$ or expressive threshold functions.
This viewpoint traces back to the origins of neural network theory~\cite{mcculloch1943logical}, was developed in classical studies of threshold and circuit models~\cite{muroga1971threshold,maass1991sigmoid,parberry1994circuit}, and has reemerged in modern form as neural algorithmic reasoning~\cite{velivckovic2021neural,velickovic2020neural,velickovic2022clrs,ibarz2022generalist,selsam2018learning}.

\paragraph{Why study Boolean circuits directly and not neural networks emulating them?}
In this line of research, neural networks are viewed as Boolean algorithms, and reasoning is formalized through their ability to \textit{compute} Boolean functions, with intermediate ``chain-of-thought'' steps represented in an interpretable Boolean language.  We therefore take the emulated Boolean circuit as our starting point, allowing general Boolean gates of bounded arity $\m\ge 2$ so as not to restrict the analysis to any particular neural architecture or Boolean language, e.g.\ binary circuits over fixed gates.

\paragraph{Probing reasoning as a transductive learning problem.} A common approach to studying internal computation is through \emph{probing methods}, which train auxiliary models to predict properties of hidden states. 
These methods form a standard toolkit in the interpretability literature, ranging from low-complexity probes~\citep{Alain2016linearprobes,DBLPjournals/corr/abs-1805-01070,bau2017network} to more expressive models~\citep{craven1996trepan,contreras2022dexire,Olah2018building,Olah2020zoom,dai-etal-2022-knowledge}. Typically, such probes are used to predict properties of internal representations, such as intermediate computations, at selected locations within a model.
While widely used in practice, the statistical behavior of such probes—especially in structured, iterative reasoning settings—remains largely unexplored.
In this work, we formulate probing as a \emph{statistical learning problem under partial observability}. We consider a setting in which a reasoning process unfolds over a structured computational graph, and each node corresponds to an intermediate computation. The underlying computation at each node is governed by a latent operation (e.g., a Boolean gate), which we aim to infer.
Rather than assuming full access to the computation, the probe observes only a \emph{sampled subset of nodes}, together with noisy signals reflecting uncertainty about the underlying operation. This naturally leads to a \emph{transductive learning} setting, in which the central question is whether performance on observed nodes generalizes to unobserved nodes of the same computation graph.

\paragraph{A stylized model of iterative reasoning.}
To make this problem tractable, we introduce a stylized model of computation based on looped Boolean circuits. Computation proceeds over a \emph{tree-structured graph} (specifically, a perfect $\nu$-ary tree), in which each internal node implements an operation drawn from a finite set (e.g., a Boolean gate with fan-in $\nu$ and fan-out $1$). The output of the circuit is recursively fed back as input, inducing a strongly connected directed graph (digraph). This captures key features of modern reasoning systems, including:
\begin{itemize}
    \item[(i)] \emph{iterative refinement} via feedback,
    \item[(ii)] \emph{structured computation} over intermediate states,
    \item[(iii)] and \emph{global information flow} across multiple steps.
\end{itemize}
We consider probes operating under \emph{intrinsic uncertainty}, where observations at each node take the form of noisy distributions over possible operations. The probe's task is to infer these node-level properties across the graph from partial observations.
To model probes that respect the computational structure, we consider hypothesis classes parameterized by graph neural networks, specifically \emph{graph convolutional networks} (GCNs). These architectures implement message passing and are widely used in neural algorithmic reasoning to approximate discrete algorithms on graphs.

\subsection{Problem formulation}

We now formalize the probing setup and the computational model underlying our analysis, in stages.

\paragraph{Model of reasoning probe and probe uncertainty.} 
Let $\m\in\mathbb{N}$ with $\m\geq 2$.
We model probe outputs as elements of the relative interior $\Delta_{\m}^{\circ}$ of the $\m$-simplex $\Delta_{\m}$---for a detailed definition, see Section~\ref{s:Prelims}---which we later interpret as encoding uncertainty over candidate gates.
Each element in $\Delta_{\m}$ represents a probability distribution over the $\m$ candidate gates, drawn from a fixed finite set.
Degenerate distributions correspond to boundary points of the simplex, which lie in $\Delta_{\m}\setminus \Delta_{\m}^{\circ}$.
We equip $\Delta_{\m}^{\circ}$ with the vector space structure induced by the Aitchison geometry; see Section~\ref{s:Prelims} and also~\citep{pawlowsky2006compositional, pal2020multiplicative, erb2021information}. Under this geometry, the boundary of $\Delta_{\m}$ lies at infinite distance and is therefore unattainable. This precludes degenerate distributions and enforces intrinsic uncertainty in probe outputs.

\paragraph{Stylized model of iterative Boolean reasoning.}
We adopt a classical view of reasoning as the execution of algorithms or circuits, which aligns naturally with the neural algorithmic reasoning paradigm. Under this lens, many modern neural architectures can be interpreted as simulating discrete computational processes, leading to strong expressivity results. Representative examples include the Turing machine viewpoint for recurrent or looped networks~\citep{siegelmann2012neural,perez2018on,chung2021turing,bournez2025universal}, and Boolean and circuit-complexity perspectives for feedforward architectures~\citep{maass1991sigmoidvsbool,merrill2022saturated,merrill2023parallelism,chiang2025transformers}. 

We model reasoning as a Boolean circuit computing a $B$-bit Boolean function $f:\{0,1\}^B\to \{0,1\}$, represented as a \emph{directed acyclic graph} (DAG) of Boolean gates. 
When the circuit is a perfect $\nu$-ary tree of height $\h$, then $B=\nu^{\h}$, and each internal computation node has fan-in $\nu$ and fan-out $1$ Boolean gate. 
As a point of reference, when $\nu=2$, every Boolean function admits such a DAG representation using binary gates (e.g., $\mathrm{AND}$, $\mathrm{OR}$) with input literals~\citep[Lemma 15]{LiKratsiosGhoukasianZvigelsky_CertifiableReasoningUniversal}.

To model \emph{iterative} reasoning, we allow the circuit output to be recursively fed back as input over multiple steps, in analogy with chain-of-thought and iterative refinement~\citep{wei2022chain,kojima2022large,saunshi2025reasoning}; see Figure~\ref{fig:loopedreasoning_vs_graph}. This produces a sequence of intermediate computational states, which we view as being written to a memory \emph{tape} initialized by a $B$-bit prompt.

Formally, let $\mathtt{B}_{\nu}=(V_{\nu},E_{\nu})$ be a \emph{perfect} (\emph{full} and \emph{complete}) rooted $\nu$-ary tree with edges oriented toward the root $\mathtt{r}$, and let $\mathtt{T}\eqdef\{T_0,T_1,\dots\}$ denote a (possibly infinite) tape. At time $t\in\mathbb{N}{\ge 0}$, we consider copies $\mathtt{B}_{\nu}^{(t)}$ and $\mathtt{T}^{(t)}$, whose evolution is governed by three operations:
\begin{equation} \label{update}
    \begin{alignedat}{3}
        &\mathtt{read} &&: \mathtt{T}^{(t)} \to \mathtt{B}_{\nu}^{(t)} && \\
        &\mathtt{shift} &&: \mathtt{T}^{(t)} \to \mathtt{T}^{(t+1)} &&\quad\text{ where }\quad T_i^{(t)}\mapsto T_{i+1}^{(t+1)} \\
        &\mathtt{write} &&: \mathtt{B}_{\nu}^{(t)} \to \mathtt{T}^{(t+1)} &&\quad\text{ where }\quad \mathtt{r}^{(t)}\mapsto T_0^{(t+1)}.
    \end{alignedat}
\end{equation}
That is, the tape is read into the computation tree, shifted forward, and updated with the resulting current output. Informally, this defines a \emph{causal} evolution $(\mathtt{B}_{\nu}^{(t)},\mathtt{T}^{(t)}) \mapsto (\mathtt{B}_{\nu}^{(t+1)},\mathtt{T}^{(t+1)})$. 
See Figure~\ref{fig:loopedreasoning_vs_graph}(a).
Moreover, the $\mathtt{read}$ operation is encoded by an underlying DAG $G^{(t)}=(V^{(t)},E^{(t)})$:
\begin{equation} \label{dynamics}
    V^{(t)} = \{T_i^{(t)}: i\in [\nu^{\h}]_0\} \cup V_{\nu}^{(t)} \quad\text{ and }\quad
    E^{(t)} = \{(T_i^{(t)}, v_{i+1}^{(t)}): i\in [\nu^{\h}]_0 \} \cup E_{\nu}^{(t)}.
\end{equation}
Here, $[\nu^{\h}]_0$ denotes the first $\nu^{\h}$ integers starting with $0$, and $v_i^{(t)}$ denotes the time-$t$ copy of the base (input) tree node $v_i$, taking Boolean values.
Thus, from \eqref{dynamics} at any given time $t$, only the first $\nu^{\h}$ tape cells interact with the computation core. 
Each internal node $v\in V_{\nu}$ is assigned a Boolean gate $\mathfrak{g}_v:\{0,1\}^{\nu}\to \{0,1\}$ from a fixed finite set, yielding a $\nu^{\h}$-bit Boolean function at each time step.
Finally, combining intra-time edges with the inter-time edges induced by $\mathtt{shift}$ and $\mathtt{write}$ gives a directed graph $\mathcal{G}^{\mathrm{time}}$ capturing global information flow. 
Its time-quotient $\mathcal{G}^{\mathrm{time}}/\mathbb{N}{\ge 0}$ is a strongly connected digraph $G{\mathrm{sc}}=(V_{\mathrm{sc}},E_{\mathrm{sc}})$ given by
\begin{equation} \label{awesomegraph}
    \begin{split}
    V_\mathrm{sc} &\eqdef V_{\nu} \cup \{T_i\}_{i\in [\nu^{\h}]_0} \\
    E_\mathrm{sc} &\eqdef E_{\nu} \cup \{(\mathtt{r},T_0)\} \cup\{(T_i,T_{i+1})\}_{i\in [\nu^{\h}]_0} \cup \{(T_i,v_{i+1})\}_{i\in [\nu^{\h}]_0}.
    \end{split}
\end{equation}
See Figure~\ref{fig:loopedreasoning_vs_graph}(b). In this way, the looped execution embeds the feedforward Boolean circuit into a strongly connected directed graph.

\begin{figure}[t]
    \centering
    \begin{minipage}[t]{0.49\linewidth}
        \centering
        \includegraphics[width=\linewidth]{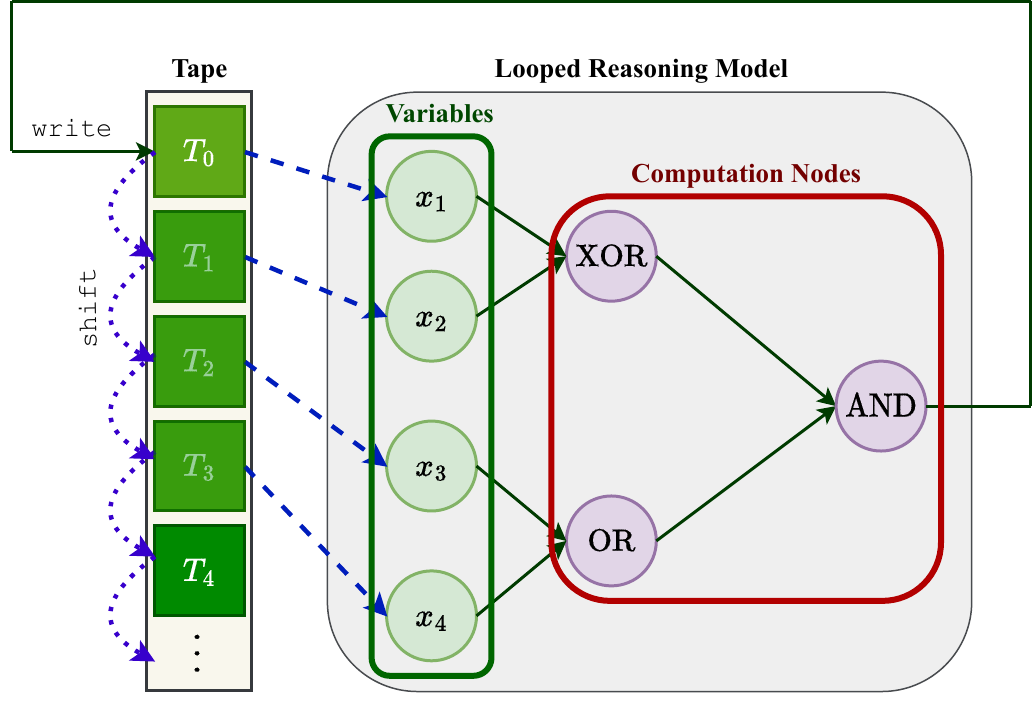}
        \hfill\\
        \vspace{.2em}
        \emph{(a) Looped reasoning model}
        \label{fig:loopedreasoning}
    \end{minipage}\hfill
    \begin{minipage}[t]{0.49\linewidth}
    \centering\includegraphics[width=.7\linewidth]{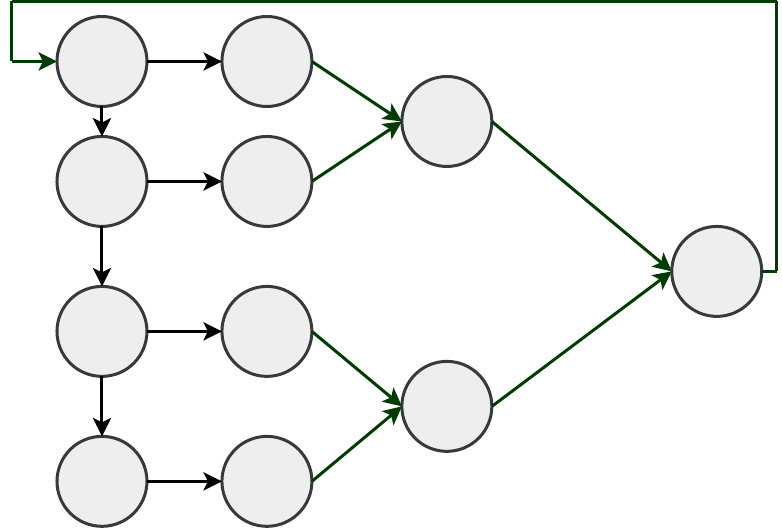}
        \hfill\\
        \vspace{.2em}
        \emph{(b) \emph{Strongly connected digraph} $\mathcal{G}^{\mathrm{time}}/\mathbb{N}_{\ge 0}$}
    \label{fig:loopedreasoning_graph}
    \end{minipage}
    \caption{A looped reasoning model (a) and its induced time-quotient, strongly connected digraph (b).
    In the looped reasoning model in (a), at time step $t\in\N$, the bits stored at the tape positions $T_0^{(t)},T_1^{(t)},T_2^{(t)},T_3^{(t)}$ are fed as input variables at the base nodes of a perfect binary rooted tree ({\color{graphgreen}{green nodes}}). The computation nodes ({\color{graphred}{red-framed box}}) apply binary gates; the output at root $\mathtt{r}^{(t)}$ ({\color{graphpurple}{purple node}} with AND operation) is fed back to tape $T_0^{(t+1)}$.
    }
    \label{fig:loopedreasoning_vs_graph}
\end{figure}

\paragraph{Model of reasoning probes as GCNs.} All probes we consider are Lipschitz with respect to the hitting-probability metric on the computation graph (see Proposition~\ref{prop:est}). From a learning-theoretic viewpoint, this Lipschitz condition on probe outputs defines a function class that varies smoothly across the graph: nearby nodes have similar gate distributions. This induces an approximately homophilic structure, in which local neighborhoods exhibit limited variation. 
This property naturally motivates the use of GCN-based probes, which are well suited to such settings~\citep{ma2022is}. 
Accordingly, we consider GCNs as concrete, computationally tractable realizations of this function class, analogous to how neural networks approximate classical function classes in Euclidean settings. A formal definition is given in Definition~\ref{defn:GGCN} (Section~\ref{s:Prelims}).

\begin{remark}[Scope and modeling choices]
Our goal is not to model the full complexity of learned neural representations, but to isolate a core statistical question that arises in probing: how well can one infer latent computational structure from partial, noisy observations of intermediate states? To this end, we have adopted a stylized model based on looped Boolean circuits, which captures key structural features of many reasoning systems---namely, iterative computation, compositional structure, and global information flow---while remaining amenable to analysis. 

This abstraction allows us to formulate probing as a transductive learning problem on a fixed computation graph, where observations are available only on a subset of nodes. In this setting, generalization corresponds to predicting unobserved parts of the same computation, a task that is central to understanding internal structure in large models where full access is unavailable. While simplified, this formulation isolates the statistical and geometric challenges inherent in probing and enables precise guarantees that would be difficult to obtain in more realistic but less tractable settings.
\end{remark}

\subsection{Overview and informal theorem}

We summarize our main contributions and state an informal version of our main result.

\paragraph{Main contribution.}
We establish statistical generalization guarantees for GCN-parameterized reasoning probes (Theorem~\ref{thrm:main_result}). 
In particular, our results show that, even for large, expanding, computation graphs, GCN-parameterized reasoning probes can achieve statistically optimal $\mathcal{O}(1/\sqrt{N})$ rates independent of graph cardinality. 
An informal version of our main theorem is as follows.

\begin{informal}
Consider a fixed computation graph with $k$ nodes, and suppose we observe $N$ noisy probe outputs at randomly sampled nodes. Then, for GCN-based probe hypothesis class $\mathcal{H}$, the (worst-case) generalization gap between empirical risk $\hat{\mathcal{R}}_N$ and population risk $\mathcal{R}$ satisfies
\[
\sup_{h \in \mathcal{H}} \big|\mathcal{R}(h) - \hat{\mathcal{R}}_N(h)\big| \;\lesssim\; \frac{1}{\sqrt{N}} + \frac{\sqrt{\log(2/\delta)}}{\sqrt{N}}
\]
with probability at least $1-\delta$ and a majorant constant that can be chosen independent of $k$ (see Section~\ref{s:Main__ss:Discussion}).
\end{informal}

\paragraph{Secondary (technical) contributions.}
At a high level, the difficulty arises from the interaction of three factors: (i) partial observability of the computation graph, (ii) structured dependencies induced by iterative (looped) computation, and (iii) the need to generalize across a fixed graph rather than over independent samples. Standard i.i.d.\ learning tools do not directly apply in this regime, as observations are both dependent and spatially constrained.

Our analysis therefore relies on two technical developments of independent interest.
First, we obtain a one-dimensional snowflake embedding for finite metric spaces. 
Second, we derive Lipschitz estimates for GCNs on digraphs; to the best of our knowledge, these are the first such results.
Together, the embedding and Lipschitz analysis identify metric geometry as the key factor governing statistical efficiency under partial observability. For a brief proof sketch and further explanation, see Section~\ref{s:proofstrategy}.

\paragraph{Organization.} Section~\ref{s:Prelims} introduces all notation and background needed to formalize our main results.
Section~\ref{s:Main} presents our main result (Theorem~\ref{thrm:main_result}). 
Technical discussions on our proof technique are provided in Section~\ref{s:Main__ss:Discussion}.
All proofs and additional theoretical background are deferred to the appendix.


\section{Preliminaries}
\label{s:Prelims}

We write $\mathbb{N}$ to denote the set of natural numbers, $\mathbb{N}_{\geq 0}$ to denote the set of non-negative integers, and $\mathbb{R}_{\geq 0}$ to denote the set of non-negative reals.
Let $k\in\N$. We also write $[k]\eqdef \{1,2,\dots,k\}$ and $[k]_0\eqdef \{0,1,\dots,k-1\}$. 
For a finite set $V$, we denote by $\#V$ its cardinality. 

We use boldface letters to denote Euclidean vectors, e.g. $\mathbf{p}$, $\mathbf{x}$, $\mathbf{Y}$, and write $\|\mathbf{x}\|_{\infty}$ for the $\ell^{\infty}$-norm of $\mathbf{x}$.
For a linear operator $W: \R^m\to\R^n$, identified as a matrix $W\in\mathbb{R}^{n\times m}$, we define $\|W\|_{\op} \eqdef \sup_{x\in \R^m \setminus \{\mathbf{0}\}}\|W \mathbf{x}\|_{\infty}/\|\mathbf{x}\|_{\infty}$ to be the induced $\ell^{\infty}$-operator norm, satisfying
$
    \|W\|_{\op} = \max_{i=1,\dots,n} \sum_{j=1}^m |W_{i,j}|
$.
In our discussion of matrices generated by vertex relationships in graphs, we allow flexibility in how matrix entries are referred to. For example, when the vertices are indexed, we write $[A]_{i,j}$ or alternatively $[A]_{v_i,v_j}$. When explicit indexing is unnecessary or cumbersome, we simply write $[A]_{v,w}$ to denote an entry in $A$ associated with the vertex pair $(v,w)$.

Finally, for $A,B>0$, we write $A\lesssim B$ to indicate that there exists an absolute constant $C>0$ such that $A\leq CB$, and $A\asymp B$ to mean both $A\lesssim B$ and $B\lesssim A$.

\paragraph{Strongly connected digraph as metric space and graph Laplacian.} A digraph $G=(V,E)$ consists of a vertex set $V$ and an edge set $E \subset V \times V$, where each edge is an ordered pair indicating direction. We always assume $G$ to be \emph{simple}, i.e., it contains no self-loops and no multiple directed edges $(i,j)$. 
A digraph is strongly connected if for every $i,j\in V$, there exists a directed path from $i$ to $j$, 
and a directed path from $j$ to $i$. 
An example of a (simple) strongly connected digraph is $G_\mathrm{sc} = (V_\mathrm{sc}, E_\mathrm{sc})$ given in \eqref{awesomegraph}.

Let $G=(V,E)$ be a strongly connected digraph, with $\# V=k$. The \emph{adjacency matrix} $A_G \in \mathbb{R}^{k \times k}$ is defined by $[A_{G}]_{i,j}=1$ if $(i,j)\in E$ and $[A_{G}]_{i,j}=0$ otherwise.
The associated \emph{degree matrix} is diagonal, with $[D_G]_{i,i} \eqdef \sum_{j=1}^k [A_G]_{i,j}$.
Then for a strongly connected digraph $G$, the \emph{row-stochastic transition matrix} 
\begin{equation} \label{transition}
    P_G \eqdef D_G^{-1}A_G
\end{equation}
is \emph{irreducible}, i.e., for every $i,j\in V$, there exists $l\in\N$ such that $[P_G^l]_{i,j}>0$. 
Thus, $P$ has a unique positive left eigenvector $\phi$ with the eigenvalue $1$, normalized by \citep[Section~2]{chung2005laplacians}, 
\begin{equation} \label{Perronsum}
    \sum_{i\in V}\phi(i)=1,
\end{equation}
known as the \emph{Perron vector} or the invariant distribution of $P_G$. Let $\Phi_G\in\mathbb{R}^{k\times k}$ be diagonal such that $\Phi_G(i,i)\eqdef\phi(i)$.  
Following \citep{chung2005laplacians}, define the \emph{combinatorial} Laplacian by\footnote{One may alternatively consider the \emph{normalized} graph Laplacian \citep{chung2005laplacians}, $\Delta^\mathrm{n}_G \eqdef \Phi_G^{-1/2}\Delta_G \Phi_G^{-1/2}$. 
We do not pursue this choice here, since the $\ell^{\infty}$-operator norm 
of $\Delta^\mathrm{n}_G$ can grow rapidly, for $G=G_\mathrm{sc}$ \eqref{awesomegraph}; see \eqref{inductive1}, \eqref{inductive2} in Appendix~\ref{appx:main}.}
\begin{equation} \label{digraphLap}
    \Delta_G \eqdef \Phi_G - \frac{\Phi_G P_G + P_G^{\top}\Phi_G^{\top}}{2} = \Phi_G - \frac{\Phi_G P_G + P_G^{\top}\Phi_G}{2}
\end{equation}
where $M^{\top}$ denotes the transpose of $M$. Evidently, $\Delta_G$ is a real-valued symmetric matrix.
Let $(X_n)_{n\in\N_0}$ be a discrete-time Markov chain on $V$ with $\mathbb{P}(X_{n+1}=j| X_n=i) = [P_G]_{i,j}$.
The \emph{hitting time} for $i\in V$ is a random variable $\tau_i \eqdef \inf\{n\in\N: X_n = i\}$. 
Let $Q_G\in\mathbb{R}^{k\times k}$ be,
\begin{equation} \label{QG}
    [Q_G]_{i,j}\eqdef \mathbb{P}[\tau_j< \tau_i| X_0 = i] \quad\text{ for }\quad i,j\in [k].
\end{equation}
Let $E_G\in\mathbb{R}^{k\times k}$ be the normalized hitting probabilities matrix, such that 
\begin{equation} \label{boydspecial}
    [E_G]_{i,j}\eqdef \phi(i)[Q_G]_{i,j}=\phi(j)[Q_G]_{ji} \quad\text{ if }\quad i\not=j
\end{equation}
\citep[Lemma~1.1]{boyd2021metric} and $[E_G]_{i,i}\eqdef 1$. 
Then by \citep[Theorem~1.3]{boyd2021metric}, the \emph{hitting probability metric} 
\begin{equation} \label{hittingmetric}
    d_G(i,j) \eqdef -\log ([E_G])_{i,j}
\end{equation}
is a metric on $G$.
In general, the metric encodes the \emph{global graph connectivity} induced by the long-term behavior of the associated directed Markov chain.

\paragraph{Graph convolutional networks on digraphs.}
Let $k\in\N$, and let $\mathcal{G}_k$ be the set of simple, strongly connected digraphs on $[k]$.
For each $G=(V,E)\in\mathcal{G}_k$, where we identify $V=[k]$, let $\Delta_G$ be 
the combinatorial Laplacian in \eqref{digraphLap}. For $\mathrm{p}\in\mathbb{N}$, let $\Delta_G^{\mathrm{p}}$ be the $\mathrm{p}$-power of $\Delta_G$. We consider the following common GCN model; see~\citep[Chapter~5.3]{ma2021deep}. 

\begin{definition}
\label{defn:GGCN}
Let $L,\mathrm{p}, d_\mathrm{in}, d_\mathrm{out}\in\N$. 
Let $\beta_1,\dots,\beta_L>0$.
For $l=0,1,\dots,L$, let $d_l\in\N$, with $d_0\eqdef d_\mathrm{in}$, $d_L\eqdef d_\mathrm{out}$.
Let $E_\mathrm{in}\subset\R^{d_\mathrm{in}}$ and $E_\mathrm{out}\subset\R^{d_\mathrm{out}}$.
For $l=1,2,\dots,L$, let $W_l\in\R^{d_l\times d_{l-1}}$ be given weight matrices, with $\|W_l\|_\mathrm{op}\le \beta_l$.
Let $\sigma:\mathbb{R}\to \mathbb{R}$ be a given $1$-Lipschitz activation function. 
We consider the maps $f_\mathrm{GCN}: \mathcal{G}_k\times E_\mathrm{in}^{k} \to E_\mathrm{out}^{k}\subset \mathbb{R}^{d_\mathrm{out}\times k}$ defined by generalized GCNs with $\mathrm{p}$-hop graph convolution, activation $\sigma$, \emph{network parameters} $(W_1,\dots,W_L)$, and \emph{network size} is given by $(\beta_1,\dots,\beta_L)$. 
These maps admit the following iterative representation. For $G\in\mathcal{G}_k$ and $\mathbf{x}\in E_\mathrm{{in}}^k$, let $f_\mathrm{GCN}(G,\mathbf{x})\eqdef \mathbf{H}_L \eqdef W_L \mathbf{H}_{L-1}$, where 
\begin{align} \label{eq:GCNcompute}
    \mathbf{H}_{l+1} \eqdef \mathfrak{L}_{l+1}(\mathbf{H}_l) \quad\text{ for }\quad l=0,1,\dots,L-2, \quad\text{ and }\quad \mathbf{H}_0 \eqdef \mathbf{x}.
\end{align}
Here, $\mathfrak{L}_l(\tilde{\mathbf{x}}) \eqdef \sigma\bullet (W_l(\Delta_G^{\mathrm{p}} \tilde{\mathbf{x}}^{\top})^{\top})$, for $\tilde{\mathbf{x}}\in \mathbb{R}^{d_l\times k}$, where $\bullet$ denotes component-wise application. 
\end{definition}

\paragraph{Aitchison geometry.} For $\m\in\mathbb{N}$ with $\m\geq 2$, let $\Delta_{\m} \eqdef \{\mathbf{p}\in\mathbb{R}^{\m}: p_i\geq 0, \, \sum_{i=1}^{\m} p_i =1\}$ be the $\m$-\emph{simplex}.
The relative interior of $\Delta_{\m}$ is
\begin{equation} \label{interiorsimplex}
    \Delta_{\m}^{\circ}\eqdef \{\mathbf{p}\in\mathbb{R}^{\m}: p_i> 0 \text{ and } \sum_{i=1}^m p_i =1\}.
\end{equation}
It is well-known that 
$\Delta_{\m}^{\circ}$ is an $(\m-1)$--dimensional simplex embedded in a linear subspace admitting the \emph{Helmert basis} \citep{aitchison1982statistical}, \citep[Section~14.1]{mardia2009directional}
\begin{equation*}
    \mathbf{e}^i \eqdef \sqrt{\frac{i}{i+1}} \Big(\underbrace{\frac{1}{i},\dots,\frac{1}{i}}_{i \text{ entries}}, -1,0,\dots,0\Big) 
\quad\text{ for }\quad i=1,2,\dots,\m-1.
\end{equation*}
Inspired by \citep{greenacre2021compositional}, we further turn $\Delta_{\m}^{\circ}$ into a \emph{Hilbert space} with the \emph{Aitchison inner product}
\begin{equation*}
    \langle \mathbf{p}, \mathbf{q} \rangle_A
    \eqdef \frac{1}{2\m} \sum_{i=1}^{\m} \sum_{j=1}^{\m} \log\Big(\frac{p_i}{p_j}\Big)\log \Big(\frac{q_i}{q_j}\Big).
\end{equation*}
Then $\Delta_{\m}^{\circ}$, equipped with the \emph{Aitchison norm} $\|\cdot\|_A \eqdef \sqrt{\langle \cdot, \cdot \rangle_A}$, is a metric space with the induced \emph{Aitchison metric}
\begin{equation} \label{Ametric}
    d_A (\mathbf{p},  \mathbf{q}) = \|\mathbf{p} - \mathbf{q}\|_A.
\end{equation}
When equipped with the Aitchison geometry \citep{aitchison1982statistical}, $\Delta_{\m}^{\circ}$ is isometrically isomorphic to $\mathbb{R}^{\m-1}$ with its Euclidean metric \citep{pawlowsky2006compositional}. 
We briefly record this identification.
Let $\mathbb{H}^{\m-1}\eqdef \{\mathbf{x}\in\mathbb{R}^{\m}: \sum_{i=1}^{\m} x_i = 0\}$.
Let $\mathrm{clr}: \Delta_{\m}^{\circ} \to\mathbb{H}^{\m-1}$ denote the \emph{centered log-ratio} transform that is
\begin{equation} \label{clr}
    \mathrm{clr}(\mathbf{p}) \eqdef \Big(\log p_i - \frac{1}{\m} \sum_{j=1}^{\m} \log p_j \Big)_{i=1}^\mathtt{m}.
\end{equation}
Projecting $\mathrm{clr}$ onto the Helmert basis defines the \emph{isometric log-ratio} map $\mathrm{ilr}: (\Delta_{\m}^{\circ}, d_A) \to (\mathbb{R}^{\m-1}, d_2)$, 
\begin{equation} \label{ilr}
    \mathrm{ilr}(\mathbf{p}) \eqdef \big(\langle \mathrm{clr}(\mathbf{p}), \mathbf{e}^i\rangle_2\big)_{i=1}^{\m-1},
\end{equation}
where $\langle \cdot,\cdot\rangle_2$ denotes the standard Euclidean inner product and $d_2$ the usual Euclidean distance.
We henceforth freely identify $\Delta_{\m}^{\circ}$ with the Euclidean space $\mathbb{R}^{\m-1}$ for convenience.

\section{Setup and main result} \label{s:Main}

Recall the computation loop $G_\mathrm{sc} = (V_\mathrm{sc}, E_\mathrm{sc})$ introduced in \eqref{awesomegraph} and the associated perfect rooted $\nu$-ary tree $\mathtt{B}_{\nu}=(V_{\nu}, E_{\nu})$ of height $\mathtt{h}$. 
The total number of vertices in $V_\mathrm{sc}$ is
\begin{equation} \label{totalvertex}
    \#V_\mathrm{sc} = k = \frac{2\nu^{\h+1}-\nu^{\h}-1}{\nu-1}.
\end{equation}
We fix an identification of $V_\mathrm{sc}$ with the index set $[k]$, chosen so that the first $1+\nu+\nu^2+\dots+\nu^{\h}$, or $k - \nu^{\h}$,
indices correspond to the nodes $V_{\nu}$ of $\mathtt{B}_{\nu}$, ordered hierarchically from the base level upward.
One convenient choice is
\begin{equation} \label{convenient}
    \underbrace{1, 2,\dots,\nu^{\h}}_{\text{base nodes}}, \underbrace{\nu^{\h}+1, \nu^{\h}+2,\dots,\nu^{\h}+\nu^{\h-1}}_{\text{level $1$ nodes}},\dots,\underbrace{k - \nu^{\h}}_{\text{root } \mathtt{r}},
\end{equation}
where the base node $v_i$ is connected to the tape position $T_{i-1}$, for $i=1,2,\dots,\nu^{\h}$.
We allow mixed notation, in which we continue to refer to the root as $\mathtt{r}$ and the tape-nodes as $T_i$.
Let $\Gamma \eqdef V_{\nu}\setminus\{v_1,v_2,\dots,v_{\nu^{\h}}\}$, consisting precisely of the computation nodes in $\mathtt{B}_{\nu}$.
Let $\mathfrak{G} = \{\mathfrak{g}^1,\dots,\mathfrak{g}^{\m}\}$ be a finite gate set.
A \emph{gate configuration} is a mapping $\mathscr{C}: v\mapsto \mathfrak{g}_v$ from $\Gamma$ into $\mathfrak{G}$.
We assume this configuration is given but unknown.
Let $\eta \in (0,1)$ be a certainty level. To avoid triviality, we exclude the case of perfect certainty $\eta = 1$ and no certainty $\eta=0$.
The \emph{reasoning probe operation} $Q_{\eta}:V\rightarrow \Delta_{\m}^{\circ}$ returns a noisy categorical description at a node $v\in V$ defined by
\begin{equation*}
    Q_{\eta}(v)
    \eqdef
    \Big( \eta\cdot\mathbf{1}_{\{\mathfrak{g}_v = \mathfrak{g}^i\}} + \frac{1-\eta}{\m-1} \cdot \mathbf{1}_{\{\mathfrak{g}_v \not= \mathfrak{g}^i\}} \Big)_{i=1}^{\m}.
\end{equation*}
Thus $Q_{\eta}$ provides local but uncertain information about $\mathscr{C}$. 
We assume a \emph{bounded-complexity condition} on $Q_{\eta}$, namely
\begin{equation} \label{eq:boundedcomp}
    \|Q_{\eta}(v) - Q_{\eta}(w)\|_A \leq K_{\h}, \quad\text{ for }\quad v,w\in\Gamma,
\end{equation}
where $K_{\h}$ grows at most exponentially in terms of $\h$.
Let $\mu_{[k]}$ be a probability measure on $[k]$ with support equal to $\Gamma$. 
Let\footnote{This means that $\mu$ is the push-forward of $\mu_{[k]}$ under $(\mathbbm{1}\times Q_{\eta})$.} 
$\mu \eqdef (\mathbbm{1}\times Q_{\eta})_{\#}\mu_{[k]}$, and suppose we are given independent random samples\footnote{We assume sampling with replacement and that the sampling distribution $\mu$ is well-behaved, in the sense that no single point carries disproportionately arbitrarily large mass. See Section~\ref{s:Main__ss:Discussion}.}
\begin{equation} \label{iidrv}
    (V_1,\mathbf{P}_1),\dots,(V_N,\mathbf{P}_N)\sim\mu,
\end{equation}
taking values on $\Gamma\times \Delta_{\m}^{\circ}$; that is, $\mathbf{P}_i = Q_{\eta}(V_i)$. We write $\mu^N$ to denote the empirical version of $\mu$.
Let $J: \Delta_{\m}^{\circ}\times \Delta_{\m}^{\circ} \rightarrow \mathbb{R}_{\geq 0}$ be a loss function satisfying
\begin{equation} \label{ellLip}
    |J(\mathbf{y}, \mathbf{z}) - J(\mathbf{y}', \mathbf{z}')| \leq \mathtt{C}_J \max\{\|\mathbf{y}-\mathbf{y}'\|_A, \|\mathbf{z}-\mathbf{z}'\|_A\},
\end{equation}
for some $\mathtt{C}_J>0$.
For $\alpha\in (0,1)$, let $J_{\alpha}: \Delta_{\m}^{\circ}\times \Delta_{\m}^{\circ} \rightarrow \mathbb{R}_{\geq 0}$ by $J_{\alpha}(\mathbf{y}, \mathbf{z}) \eqdef J(\mathbf{y}, \mathbf{z})^{\alpha}$ be a \emph{snowflaked} loss. 
Note that the resulting loss increases sensitivity to discrepancies at fine scales. 
When $\alpha=1/2$, the resulting square-root loss has been used in \citep{stucky2017sharp, belloni2011square}.
Given a configuration $\mathscr{C}$, we assume that at time $t\in\mathbb{N}$, a temporal input is applied at the base nodes of $\mathtt{B}_{\nu}$. This input propagates through $\mathtt{B}_{\nu}$, generating a Boolean output at each $v\in\Gamma$. We collect these outputs into a vector $\mathbf{x}\in\{0,1\}^{s_{\nu,\h}}$ where $s_{\nu,\h} \eqdef \frac{\nu^{\h}-1}{\nu-1}$.
\emph{Treating each such time step as a stand-alone experimental trial induced by the underlying configuration $\mathscr{C}$}, we study the (static) \emph{transductive generalization gap} over a set of graph learners trained on $\mathbf{x}$ at time $t\in\mathbb{N}$ and values of $Q_{\eta}$ at sampled computation nodes. 

Let $G_{\Gamma}\subset G_\mathrm{sc}$ be the induced subgraph on $\Gamma$.
Recall the general GCN model $f_\mathrm{GCN}$ in Definition~\ref{defn:GGCN}, with depth $L\in\mathbb{N}$ and network parameters $(\beta_1,\dots,\beta_L)$.
Set $d_\mathrm{in}=1$, $d_\mathrm{out}=\mathtt{m}-1$. 
We define a hypothesis set $\mathcal{H}$ for $Q_{\eta}$ consisting of
\begin{equation} \label{h}
    h \eqdef (\mathbf{1}_{s_{\nu,\h}}\otimes \mathrm{ilr}^{-1}) \circ f_\mathrm{GCN}(G_\Gamma,\cdot),
\end{equation}
where $\mathbf{1}_{s_{\nu,\h}}$ denotes the $s_{\nu,\h}$-dimensional vector of ones, and $\mathrm{ilr}$ is given in \eqref{ilr}.
By definition \eqref{h}, $E_\mathrm{in}=\{0,1\}$ and $E_\mathrm{out}\subset \Delta_{\m}^{\circ}$.
For $h\in\mathcal{H}$, we take the empirical risk to be
\begin{equation} \label{eqdef:emprisk}
    \mathcal{R}_{\mathbf{x},t}^{\alpha, N}(h)  \eqdef \frac1{N} \sum_{i=1}^N
    J_{\alpha}(\pi_{V_i}\circ h,\mathbf{P}_i),
\end{equation}
and the corresponding population risk to be
\begin{equation} \label{eqdef:truerisk}
    \mathcal{R}_{\mathbf{x},t}^{\alpha}
    (h) \eqdef \mathbb{E}_{(V,\mathbf{P})\sim\mu} \big[J_{\alpha}(\pi_{V}\circ h,\mathbf{P})\big].
\end{equation}
The \emph{worst-case} discrepancy between these two risks is captured by the transductive generalization gap over $\mathcal{H}$ defined by
\begin{equation}
\label{eq:gen_gap}
    \sup_{h\in\mathcal{H}} 
    \big|\mathcal{R}_{\mathbf{x},t}^{\alpha}(h) - \mathcal{R}_{\mathbf{x},t}^{\alpha, N}(h) \big|.
\end{equation}
Our main result provides an explicit upper bound on the generalization gap in terms of the problem parameters.
\begin{theorem}[Main result]
\label{thrm:main_result}
Let $\alpha\in (0,1)$.
Let $t,N\in \mathbb{N}$. For every $\delta\in (0,1)$, the following event holds with probability at least $1-\delta$
\begin{multline} \label{mainres}
    \sup_{h\in\mathcal{H}} |\mathcal{R}_{\mathbf{x},t}^{\alpha}(h) - \mathcal{R}_{\mathbf{x},t}^{\alpha, N}(h)|
    \\
    \lesssim \Big(\mathtt{C}_J \Big(\max\Big\{\mathtt{m}^{1/2}\Big(\frac{3+\nu}{2}\Big)^{\mathrm{p} (L-1)} \prod_{l=1}^L \beta_l, \nu^{\h}
    , K_{\h} \Big\}\Big)^{5/2} \Big)^{\alpha} \cdot
    \Big(\frac{1}{\sqrt{N}} + \frac{\sqrt{\log(2/\delta)}}{\sqrt{N}}\Big).
\end{multline}
\end{theorem}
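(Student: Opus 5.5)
The plan is to reduce the uniform deviation to a transport distance between $\mu$ and its empirical version $\mu^N$, and then to control that distance with a one-dimensional embedding so that no dependence on $k$ appears.

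\textbf{Step 1: Lipschitz reduction.} Each loss $\ell_h(v,\mathbf{p}) \eqdef J_\alpha(\pi_v\circ h,\mathbf{p})$ is a function on the metric space $\Gamma\times\Delta_{\m}^{\circ}$. I will use the metric $\rho((v,\mathbf{p}),(w,\mathbf{q}))\eqdef \max\{d_G(v,w),\|\mathbf{p}-\mathbf{q}\|_A\}$, built from the hitting-probability metric \eqref{hittingmetric}. By Proposition~\ref{prop:est}, the GCN output $v\mapsto \pi_v\circ h$ is Lipschitz from $(\Gamma,d_G)$ into $(\Delta_{\m}^{\circ},d_A)$, with constant of order $\m^{1/2}((3+\nu)/2)^{\mathrm{p}(L-1)}\prod_l\beta_l$. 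This constant is uniform in $h$. The factor $((3+\nu)/2)^{\mathrm{p}}$ should come from $\|\Delta_{G_\Gamma}\|_{\op}\le (3+\nu)/2$, because the degrees in the tree are at most $\nu+1$. Combining this with \eqref{ellLip}, $\ell_h$ is $\mathtt{C}_J M$-Lipschitz for $J$, where $M\eqdef\max\{\cdots\}$ is the quantity in \eqref{mainres}. Since $t\mapsto t^\alpha$ is $\alpha$-Hölder, $\ell_h$ is $\alpha$-Hölder with constant $(\mathtt{C}_JM)^\alpha$. Equivalently, $\ell_h$ is Lipschitz for the snowflake metric $\rho^\alpha$.

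\textbf{Step 2: Kantorovich--Rubinstein.} With this Hölder bound,
\[
\sup_h|\mathcal{R}^\alpha(h)-\mathcal{R}^{\alpha,N}(h)|\le (\mathtt{C}_JM)^\alpha\, W_1^{\rho^\alpha}(\mu,\mu^N).
\]
So it suffices to bound the Wasserstein distance $W_1^{\rho^\alpha}(\mu,\mu^N)$. Since $\mathbf{P}=Q_\eta(V)$ is a deterministic function of $V$, $\mu$ is supported on the graph of $Q_\eta$. On that graph $\rho$ is bi-Lipschitz to a metric on $\Gamma$ alone, namely $\max\{d_G,\|Q_\eta(\cdot)-Q_\eta(\cdot)\|_A\}$. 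Its diameter is controlled by $K_\h$ via \eqref{eq:boundedcomp} and by the diameter of $d_G$. I expect the latter to be $\lesssim \h\log\nu$, hence $\le \nu^\h$, which is why $\nu^\h$ appears in $M$.

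\textbf{Step 3: snowflake embedding (main obstacle).} I will embed the finite metric space $(\Gamma,\rho^\alpha)$ into $\R$ with distortion bounded by a power of its diameter-to-separation ratio, independent of $\#\Gamma$. The exponent $5/2$ in $M^{5/2}$ should be absorbed here. First try: order the points by a greedy chain and use a Fréchet/Assouad-type snowflake argument for doubling spaces. The tree's hitting metric is essentially ultrametric-like along levels, and ultrametrics embed into $\R$ after snowflaking. Once embedded in $\R$, $W_1$ on $\R$ equals $\int|F-F_N|$, which gives
\[
\mathbb{E}W_1\lesssim \mathrm{diam}\cdot N^{-1/2}.
\]
No dimension penalty arises because the target is one-dimensional. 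Controlling the distortion without $k$-dependence, and with only the stated power of $M$, is the hard part. If the direct chain argument fails, I will fall back on exploiting the explicit level structure of $G_\Gamma$ to build the embedding by hand.

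\textbf{Step 4: concentration.} Replacing one sample changes $W_1(\mu,\mu^N)$ by at most $\mathrm{diam}/N$. McDiarmid's inequality therefore adds a term $\mathrm{diam}\sqrt{\log(2/\delta)}/\sqrt N$ with probability at least $1-\delta$. Collecting the constants into $(\mathtt{C}_JM^{5/2})^\alpha$ yields \eqref{mainres}.
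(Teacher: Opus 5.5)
Your Steps 1, 2 and 4 follow the paper's proof. The paper also uses the max-metric on the graph $\mathscr{D}=\{(v,Q_\eta(v)):v\in\Gamma\}$, the Lipschitz constant $\mathtt{C}_J(\mathtt{C}_{\mathcal H}+\mathtt{C}_{Q_\eta})$ from Proposition~\ref{prop:est}, Kantorovich--Rubinstein to reach $(\mathtt{C}_J^{\star})^\alpha\,\mathcal{W}_\alpha(\mu,\mu^N)$, and on $\R$ the bound $\mathbb{E}\mathcal{W}_1\le\sqrt2\,\mathrm{diam}/\sqrt N$ plus bounded differences (Proposition~\ref{prop:con_holder_wass}). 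One correction: the hitting-probability diameter is not $\lesssim\h\log\nu$. The stationary mass halves along the tape, $\phi(T_i)=\phi(\mathtt r)2^{-i}$, so $\mathrm{diam}(\Gamma)\asymp\nu^{\h}$; this is why $\nu^{\h}$ appears in the max. Everything therefore rests on Step 3, which you leave open. To obtain $M^{5/2}$ you need $\varphi:(\mathscr D,\rho^\alpha)\to\R$ with distortion $\lesssim D^{\alpha/2}$, where $D=\mathrm{diam}(\mathscr D)$ and the separation is $s\ge1$. This is exactly the paper's Proposition~\ref{prop:independent embedding}. The paper obtains it in three moves:
\begin{itemize}
\item Neiman's $\ell^\infty$-Assouad embedding;
\item passing to the $1/2$-snowflake of the image, at cost $D^{\alpha/2}$;
\item Har-Peled--Mendel with $\theta$ close to $1$, to force target dimension one.
\end{itemize}

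This step cannot be carried out by a greedy chain, by a hand-built embedding, or by the paper's argument. Suppose $c\,\rho^\alpha\le|\varphi(x)-\varphi(y)|\le C\rho^\alpha$ on $n$ points. Ordering the images gives $(n-1)\,c\,s^\alpha\le C D^\alpha$, so the distortion is at least $(n-1)(s/D)^\alpha$.
\begin{itemize}
\item For $k\ge6$ equidistant points this is $k-1\ge5$ for every $\alpha$. Such a set is an ultrametric, so your ultrametric heuristic fails too. It also contradicts the bound $(1+1/20)^3(1+\theta)^2<5$ that Proposition~\ref{prop:independent embedding} gives when $D=1$. The error there is treating the $\theta^{-O(\log\mathtt{M}')}$ dimension bound of Har-Peled--Mendel as exact, so that $\theta\to1$ forces $m_\theta=1$.
\item For $\Gamma$ itself, $n\asymp\nu^{\h-1}$ and $D\lesssim\nu^{\h}$ (for fixed $\eta,\m$). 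This gives distortion $\gtrsim\nu^{(1-\alpha)\h-1}$, which exceeds the required $\nu^{\alpha\h/2}$ once $\alpha<2/3$ and $\h$ is large.
\end{itemize}

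The intermediate target itself is also false, so no other way of bounding $\mathcal{W}_\alpha(\mu,\mu^N)$ rescues the route. Your concentration step is fine; the failure is in the expectation. Every function with values in $[0,s^\alpha]$ is $1$-Lipschitz for $\rho^\alpha$. Hence for $\mu$ uniform on $\Gamma$, $\mathcal{W}_\alpha(\mu,\mu^N)\ge s^\alpha\,\mathrm{TV}(\mu,\mu^N)\ge 1-N/n$ deterministically. This contradicts $\mathcal{W}_\alpha\lesssim D^{3\alpha/2}/\sqrt N$ whenever $D^{3\alpha}\ll N\le n/2$, for instance in the advertised regime $\alpha\asymp\h^{-1}$.

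A correct argument therefore has two options. It can accept dependence on $\#\Gamma$, via a $\sqrt{\#\Gamma/N}$-type term. Or, to get a $\#\Gamma$-free bound, it must use that the loss class $\{v\mapsto J_\alpha(\pi_v\circ h,Q_\eta(v)):h\in\mathcal H\}$ is far smaller than the full H\"older ball. Neither your proposal nor the paper does either.
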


\section{Interpretation and proof strategy of main theorem}
\label{s:Main__ss:Discussion}

We now provide a discussion of the main result, including its interpretation and proof strategy.

\paragraph{Geometric control via graph diameter.}
The generalization bound in Theorem~\ref{thrm:main_result} is governed by the diameter of the induced metric on the computation graph.  For looped $\nu$-ary trees, this diameter is, up to constants, determined by the tree height and enters the concentration bounds directly.  This suggests that, for more general looped strongly connected computation graphs, the underlying digraph diameter may similarly control statistical efficiency.

\paragraph{Uniform graph-size-independent generalization and the role of $\alpha$.}
The analysis underlying Theorem~\ref{thrm:main_result} is \emph{time-uniform} and applies to all $\mathbf{x}\in\{0,1\}^{s_{\nu,\h}}$ without leveraging accumulated information across rounds\footnote{Learning under feedback—where inputs depend on past outputs—would require additional assumptions on the permissible configurations.}. A key feature of the result is that the parameter $\alpha\in(0,1)$ in \eqref{mainres} is chosen \emph{a priori} and controls a tradeoff between the strength of the loss and the dependence of the generalization bound on the graph structure. In particular, the loss $J_\alpha$ varies with $\alpha$, so different choices of $\alpha$ correspond to evaluating performance under different metrics. For fixed $\alpha$, the constants in the bound may depend on structural parameters such as $\h$. However, selecting $\alpha\asymp \h^{-1}$ ensures that these constants remain uniformly controlled as $\h$ grows, yielding meaningful convergence with a rate that remains $\mathcal{O}(1/\sqrt{N})$.
\emph{To our knowledge, this is the first instance in which such dimension- or cardinality-free rates are established for reasoning probes operating on looped, strongly connected computation graphs.}

In our setting, the introduction of the snowflaked loss $J_\alpha$ is not merely technical: it enables a low-distortion embedding of the computation graph into one dimension, which is essential for obtaining optimal rates via Wasserstein concentration. Smaller values of $\alpha$ improve the geometric regularity of the graph metric and facilitate this embedding, while larger values correspond to stronger discrepancies between probe outputs. 
\emph{This tradeoff is intrinsic to analyses based on snowflaked metrics; here, it establishes a link between the geometry of the underlying computation graph and the statistical guarantees that can be achieved.}

\paragraph{Contextual discussion on coverage property of i.i.d. sampling}
With computational efficiency in mind, we consider i.i.d.\ (memoryless) probing of vertices. Unlike systematic exploration or uniform sampling without replacement---which enforces non-revisitation and guarantees full coverage after $k$ queries at the cost of storing previously visited vertices---i.i.d.\ sampling incurs no storage overhead. Under this scheme, the total number of probes $N$ may exceed $\nu^{\h}$, the order of the total number of computational nodes, and even satisfy $N\gg \nu^{\h}$ without guaranteeing full coverage. In particular, repeated sampling of a small subset of vertices may dominate, leading to highly unbalanced visitation frequencies. This phenomenon is classical in occupancy problems; see, for example, \citep[Proposition~1]{BrownPekozRoss2008Coupon}.
While arbitrarily ill-conditioned distributions are theoretically admissible, in practice users rarely deploy strategies assigning vanishing mass to a large fraction of vertices. 
(\emph{We emphasize that our generalization bounds do not require uniform coverage or coupon-collector–type guarantees, and remain valid even when many vertices are never observed.})

\paragraph{Learning-theoretic implications and future directions.}
Our results are a step toward transductive learning guarantees for reasoning probes.  Future work includes extending the analysis to broader looped reasoning models and probe architectures beyond GCNs, and developing methods for recovering structured internal computations from partial observations.  Our guarantees identify when global structure can be inferred from limited queries under minimal assumptions; extending this to finer inference and detection tasks remains open.

\subsection{A metric-embedding-based proof strategy} \label{s:proofstrategy}

Our proof adopts a geometric strategy inspired by recent developments in metric embedding methods in learning theory. 
We briefly review related ideas from the literature, and then outline the proof strategy.
Viewing both $\mu$ and $\mu^N$ as measures on a finite metric space, the central idea is to relate the worst-case generalization gap---uniformly over the hypothesis class---between the true risk and the empirical risk to a Wasserstein distance between $\mu$ and $\mu^N$. 
This perspective frames generalization as a statistical optimal transport problem, where the main technical challenge is to control the concentration of the Wasserstein distance.
Existing results are predominantly qualitative, including a central limit theorem \citep{sommerfeld2018inference}.
Few non-asymptotic guarantees are available; see e.g. \citep{kratsios2024tighter}.
There, the Wasserstein distance between $\mu$ and $\mu^N$ is mapped to a corresponding Wasserstein distance in Euclidean space, for which classical optimal transport theory provides sharp concentration results. Specifically, for $m\geq 3$, the Wasserstein distance tightly concentrates around its expectation, which scales as $\mathcal{O}(1/N^{1/m})$; for $m=2$, the rate is $\mathcal{O}(\log_2(N)/N^{1/2})$; and for $m=1$, the optimal rate $\mathcal{O}(1/N^{1/2})$ holds.
The resulting generalization bounds in \citep{kratsios2024tighter}, akin to standard Occam-type bounds, depend on the cardinality of the underlying metric space. This hierarchy of rates highlights the intrinsic cost of high dimensionality and motivates reducing the effective dimension.

More recently, \citep{detering2025learning} removes the dependence on metric cardinality by adopting a fractional $1/2$-Wasserstein distance.
In their approach, both the true and empirical risks are measured with respect to the $1/2$-snowflaked loss function, and---by combining the results of \citep{hou2023instance} with Kantorovich duality---the resulting generalization gap is controlled by a multiple of the corresponding $1/2$-Wasserstein distance in Euclidean space.
The analysis therein incurs a logarithmic factor in $N$, arising from a reduction to two dimensions; comparable rates are well-known in classical Rademacher- and Gaussian-complexity bounds~\citep{bartlett2002rademacher,bartlett2005local,bartl2025uniform,bartl2025we}.  
We follow the same general strategy, but observe that the relevant metric space admits an $\alpha$-snowflake representation in one dimension, for $\alpha\in (0,1)$. We show that the graph metric space $G_{\Gamma}$, viewed as a subgraph of $G_{\rm sc}$ and equipped with the \emph{snowflaked} hitting-probability metric (see \eqref{hittingmetric}) $d_{G_{\rm sc}}^{\alpha}$, embeds into $(\mathbb{R},d_{\infty})$, where $d_{\infty}$ is the absolute value metric. This permits access to the optimal one-dimensional Wasserstein convergence rate of $\mathcal{O}(1/N^{1/2})$.

\paragraph{Insights from the proof of Theorem~\ref{thrm:main_result}}  The proof proceeds in two steps: 
\begin{itemize}
    \item[{\bf Step 1:}] We construct a one-dimensional bi-Lipschitz embedding of $(G_{\Gamma}, d_{G_{\rm sc}}^{\alpha})$ with minimal distortion (given in Proposition~\ref{prop:independent embedding} for general finite metric space).  
    This step combines Assouad-type embedding techniques for doubling metric spaces \citep{OferLowDimEmbeddingDoublingSpaces} with low-dimensional embedding results for snowflaked metrics \citep{HarPeledMender_2006EmbeddingViaFastNets_SIAMJCompute}, which can be viewed as an ultra-low-dimensional analogue of Assouad-type constructions for doubling metric spaces~\citep{assouad1983plongements,naor2012assouad}. 
    
    \item[{\bf Step 2:}] Following the reduction in Step 1, we control the worst-case generalization gap \eqref{eq:gen_gap} by the corresponding one-dimensional $\alpha$-Wasserstein distance (Proposition~\ref{prop:con_holder_wass}), yielding a non-asymptotic concentration analysis complementary to~\citep{sommerfeld2018inference}. 
    As an intermediate step, we estimate the Lipschitz regularity of the model $f_{\rm GCN}$ (Proposition~\ref{prop:est}) and diameter of the digraph $G_{\Gamma}$; both quantities contribute, in part, to the majorant constant in \eqref{mainres}.
    Combined, these yield optimal sample complexity bounds with controlled constants, refining the geometric learning framework of~\citep{detering2025learning,kratsios2024tighter}.
\end{itemize}




\newpage
\bibliographystyle{plain}
\bibliography{0_DAG}

@inproceedings{ma2022is,
title={Is Homophily a Necessity for Graph Neural Networks?},
author={Yao Ma and Xiaorui Liu and Neil Shah and Jiliang Tang},
booktitle={International Conference on Learning Representations},
year={2022},
url={https://openreview.net/forum?id=ucASPPD9GKN}
}

@article{pawlowsky2006compositional,
  title={Compositional data and their analysis: an introduction},
  author={Pawlowsky-Glahn, Vera and Egozcue, Juan Jos{\'e}},
  journal={Geological Society, London, Special Publications},
  volume={264},
  number={1},
  pages={1--10},
  year={2006},
  publisher={The Geological Society of London}
}

@article{pal2020multiplicative,
  title={Multiplicative Schr{\"o}dinger problem and the Dirichlet transport},
  author={Pal, Soumik and Wong, Ting-Kam Leonard},
  journal={Probability Theory and Related Fields},
  volume={178},
  number={1},
  pages={613--654},
  year={2020},
  publisher={Springer}
}

@incollection{erb2021information,
  title={The information-geometric perspective of Compositional Data Analysis},
  author={Erb, Ionas and Ay, Nihat},
  booktitle={Advances in Compositional Data Analysis: Festschrift in Honour of Vera Pawlowsky-Glahn},
  pages={21--43},
  year={2021},
  publisher={Springer}
}

@book{mardia2009directional,
  title={Directional statistics},
  author={Mardia, Kanti V and Jupp, Peter E},
  year={2009},
  publisher={John Wiley \& Sons}
}

@article{stucky2017sharp,
  title={Sharp oracle inequalities for square root regularization},
  author={Stucky, Benjamin and Van De Geer, Sara},
  journal={Journal of Machine Learning Research},
  volume={18},
  number={67},
  pages={1--29},
  year={2017}
}

@article{belloni2011square,
  title={Square-root lasso: pivotal recovery of sparse signals via conic programming},
  author={Belloni, Alexandre and Chernozhukov, Victor and Wang, Lie},
  journal={Biometrika},
  volume={98},
  number={4},
  pages={791--806},
  year={2011},
  publisher={Oxford University Press}
}

@article{kratsios2024tighter,
  title={Tighter generalization bounds on digital computers via discrete optimal transport},
  author={Kratsios, Anastasis and Neuman, A Martina and Pammer, Gudmund},
  journal={arXiv preprint arXiv:2402.05576},
  year={2024}
}

@article{sommerfeld2018inference,
  title={Inference for empirical Wasserstein distances on finite spaces},
  author={Sommerfeld, Max and Munk, Axel},
  journal={Journal of the Royal Statistical Society Series B: Statistical Methodology},
  volume={80},
  number={1},
  pages={219--238},
  year={2018},
  publisher={Oxford University Press}
}

@inproceedings{craven1996trepan,
 author = {Craven, Mark and Shavlik, Jude},
 booktitle = {Advances in Neural Information Processing Systems},
 editor = {D. Touretzky and M.C. Mozer and M. Hasselmo},
 pages = {},
 publisher = {MIT Press},
 title = {Extracting Tree-Structured Representations of Trained Networks},
 url = {https://proceedings.neurips.cc/paper_files/paper/1995/file/45f31d16b1058d586fc3be7207b58053-Paper.pdf},
 volume = {8},
 year = {1995}
}

@article{contreras2022dexire,
AUTHOR = {Contreras, Victor and Marini, Niccolo and Fanda, Lora and Manzo, Gaetano and Mualla, Yazan and Calbimonte, Jean-Paul and Schumacher, Michael and Calvaresi, Davide},
TITLE = {A DEXiRE for Extracting Propositional Rules from Neural Networks via Binarization},
JOURNAL = {Electronics},
VOLUME = {11},
YEAR = {2022},
NUMBER = {24},
ARTICLE-NUMBER = {4171},
URL = {https://www.mdpi.com/2079-9292/11/24/4171},
ISSN = {2079-9292},
DOI = {10.3390/electronics11244171}
}

@article{Olah2018building,
  title   = {The Building Blocks of Interpretability},
  author  = {Olah, Chris and Satyanarayan, Arvind and Johnson, Ian 
             and Carter, Shan and Schubert, Ludwig and Ye, Katherine 
             and Mordvintsev, Alexander},
  journal = {Distill},
  volume  = {3},
  number  = {3},
  pages   = {e10},
  year    = {2018},
  doi     = {10.23915/DISTILL.00010}
}

@article{Olah2020zoom,
  title   = {Zoom In: An Introduction to Circuits},
  author  = {Olah, Chris and Cammarata, Nick and Schubert, Ludwig and 
             Goh, Gabriel and Petrov, Michael and Carter, Shan},
  journal = {Distill},
  volume  = {5},
  number  = {3},
  pages   = {e00024.001},
  year    = {2020},
  doi     = {10.23915/distill.00024.001}
}

@inproceedings{dai-etal-2022-knowledge,
  title = {Knowledge Neurons in Pretrained Transformers},
  author = {Dai, Damai and Dong, Li and Hao, Yaru and Sui, Zhifang and Chang, Baobao and Wei, Furu},
  booktitle = {Proceedings of the 60th Annual Meeting of the Association for Computational Linguistics (Volume 1: Long Papers)},
  year = {2022},
  pages = {8493--8502},
  address = {Dublin, Ireland},
  publisher = {Association for Computational Linguistics}
}

@article{Alain2016linearprobes,
  title = {Understanding intermediate layers using linear classifier probes},
  author = {Alain, Guillaume and Bengio, Yoshua},
  journal = {arXiv preprint arXiv:1610.01644},
  year = {2016}
}

@article{DBLPjournals/corr/abs-1805-01070,
  author    = {Alexis Conneau and Germ{\'{a}}n Kruszewski and Guillaume Lample 
               and Lo{\"{\i}}c Barrault and Marco Baroni},
  title     = {What you can cram into a single vector: Probing sentence embeddings 
               for linguistic properties},
  journal   = {CoRR},
  volume    = {abs/1805.01070},
  year      = {2018},
  eprint    = {1805.01070}
}

@inproceedings{bau2017network,
  title = {Network Dissection: Quantifying Interpretability of Deep Visual Representations},
  author = {Bau, David and Zhou, Bolei and Khosla, Aditya and Oliva, Aude and Torralba, Antonio},
  booktitle = {Proceedings of the IEEE Conference on Computer Vision and Pattern Recognition (CVPR)},
  pages = {3319--3327},
  year = {2017},
  doi = {10.1109/CVPR.2017.354}
}

@inproceedings{saunshi2025reasoning,
title={Reasoning with Latent Thoughts: On the Power of Looped Transformers},
author={Nikunj Saunshi and Nishanth Dikkala and Zhiyuan Li and Sanjiv Kumar and Sashank J. Reddi},
booktitle={The Thirteenth International Conference on Learning Representations},
year={2025},
url={https://openreview.net/forum?id=din0lGfZFd}
}

@article{LiKratsiosGhoukasianZvigelsky_CertifiableReasoningUniversal,
  title   = {Certifiable Reasoning Is Universal: A Differentiable Reasoning {AI}},
  author  = {Li, Wenhao and Kratsios, Anastasis and Ghoukasian, Hrad and Zvigelsky, Dennis},
  journal = {arXiv preprint},
  year    = {2026},
  note    = {Preprint}
}

@article{bengio2024managing,
  title={Managing extreme {AI} risks amid rapid progress},
  author={Bengio, Yoshua and Hinton, Geoffrey and Yao, Andrew and Song, Dawn and Abbeel, Pieter and Darrell, Trevor and Harari, Yuval Noah and Zhang, Ya-Qin and Xue, Lan and Shalev-Shwartz, Shai and others},
  journal={Science},
  volume={384},
  number={6698},
  pages={842--845},
  year={2024},
  publisher={American Association for the Advancement of Science}
}

@article{hou2023instance,
  title={Instance-dependent generalization bounds via optimal transport},
  author={Hou, Songyan and Kassraie, Parnian and Kratsios, Anastasis and Krause, Andreas and Rothfuss, Jonas},
  journal={Journal of Machine Learning Research},
  volume={24},
  number={349},
  pages={1--51},
  year={2023}
}

@article{naor2012assouad,
  title={Assouad’s theorem with dimension independent of the snowflaking},
  author={Naor, Assaf and Neiman, Ofer},
  journal={Revista Matematica Iberoamericana},
  volume={28},
  number={4},
  pages={1123--1142},
  year={2012}
}

@article{assouad1983plongements,
  title={Plongements {L}ipschitziens dans $\mathbb{R}^n$},
  author={Assouad, Patrice},
  journal={Bulletin de la Soci{\'e}t{\'e} Math{\'e}matique de France},
  volume={111},
  pages={429--448},
  year={1983}
}

@book{bostrom2014superintelligence,
  title={Superintelligence: Paths, Dangers, Strategies},
  author={Bostrom, Nick},
  publisher={Oxford University Press},
  year={2014}
}

@inproceedings{asai2024selfrag,
title={Self-{RAG}: Learning to Retrieve, Generate, and Critique through Self-Reflection},
author={Akari Asai and Zeqiu Wu and Yizhong Wang and Avirup Sil and Hannaneh Hajishirzi},
booktitle={The Twelfth International Conference on Learning Representations},
year={2024},
url={https://openreview.net/forum?id=hSyW5go0v8}
}

@article{lewis2020retrieval,
  title={Retrieval-augmented generation for knowledge-intensive nlp tasks},
  author={Lewis, Patrick and Perez, Ethan and Piktus, Aleksandra and Petroni, Fabio and Karpukhin, Vladimir and Goyal, Naman and K{\"u}ttler, Heinrich and Lewis, Mike and Yih, Wen-tau and Rockt{\"a}schel, Tim and others},
  journal={Advances in neural information processing systems},
  volume={33},
  pages={9459--9474},
  year={2020}
}

@article{song2025thinking,
  title={Thinking Isn't an Illusion: Overcoming the Limitations of Reasoning Models via Tool Augmentations},
  author={Song, Zhao and Yue, Song and Zhang, Jiahao},
  journal={arXiv preprint arXiv:2507.17699},
  year={2025}
}

@inproceedings{bournez2025universal,
  title     = {A Universal Uniform Approximation Theorem for Neural Networks},
  author    = {Olivier Bournez and Johanne Cohen and Adrian Wurm},
  booktitle = {Proceedings of the International Symposium on Mathematical Foundations of Computer Science (MFCS)},
  year      = {2025},
  address   = {Palaiseau, France},
  institution = {Institut Polytechnique de Paris, Université Paris-Saclay, BTU Cottbus-Senftenberg}
}

@inproceedings{maass1991sigmoidvsbool,
  author    = {Wolfgang Maass and Georg Schnitger and Eduardo D. Sontag},
  title     = {On the Computational Power of Sigmoid versus Boolean Threshold Circuits},
  booktitle = {Proceedings of the 32nd Annual Symposium on Foundations of Computer Science (FOCS)},
  year      = {1991},
  pages     = {767--776}
}

@article{merrill2023parallelism,
  author  = {William Merrill and Ashish Sabharwal},
  title   = {The Parallelism Tradeoff: Limitations of Log-Precision Transformers},
  journal = {Transactions of the Association for Computational Linguistics},
  year    = {2023}
}

@book{siegelmann2012neural,
  title={Neural networks and analog computation: beyond the Turing limit},
  author={Siegelmann, Hava T},
  year={2012},
  publisher={Springer Science \& Business Media}
}

@inproceedings{perez2018on,
    title={On the Turing Completeness of Modern Neural Network Architectures},
    author={Jorge Pérez and Javier Marinković and Pablo Barceló},
    booktitle={International Conference on Learning Representations},
    year={2019},
    url={https://openreview.net/forum?id=HyGBdo0qFm},
}

@article{chung2021turing,
  title={Turing completeness of bounded-precision recurrent neural networks},
  author={Chung, Stephen and Siegelmann, Hava},
  journal={Advances in neural information processing systems},
  volume={34},
  pages={28431--28441},
  year={2021}
}

@inproceedings{chen2025magicore,
  title     = {MAGICORE: Multi-Agent, Iterative, Coarse-to-Fine Refinement for Reasoning},
  author    = {Chen, Justin Chih-Yao and Prasad, Archiki and Saha, Swarnadeep and Stengel-Eskin, Elias and Bansal, Mohit},
  booktitle = {Proceedings of EMNLP},
  year      = {2025},
  url       = {https://aclanthology.org/2025.emnlp-main.1660/}
}

@inproceedings{wang2023selfconsistency,
  title     = {Self-Consistency Improves Chain of Thought Reasoning in Language Models},
  author    = {Wang, Xuezhi and Wei, Jason and Schuurmans, Dale and Le, Quoc V. and Chi, Ed H. and Narang, Sharan and Chowdhery, Aakanksha and Zhou, Denny},
  booktitle = {International Conference on Learning Representations (ICLR)},
  year      = {2023},
  url       = {https://openreview.net/forum?id=1PL1NIMMrw}
}

@inproceedings{shinn2023reflexion,
  title     = {Reflexion: Language Agents with Verbal Reinforcement Learning},
  author    = {Shinn, Noah and Labash, Beck and Gopinath, Ashwin and Narasimhan, Karthik R.},
  booktitle = {Advances in Neural Information Processing Systems (NeurIPS)},
  year      = {2023},
  url       = {https://arxiv.org/abs/2303.11366}
}

@article{madaan2023self,
  title={Self-refine: Iterative refinement with self-feedback},
  author={Madaan, Aman and Tandon, Niket and Gupta, Prakhar and Hallinan, Skyler and Gao, Luyu and Wiegreffe, Sarah and Alon, Uri and Dziri, Nouha and Prabhumoye, Shrimai and Yang, Yiming and others},
  journal={Advances in Neural Information Processing Systems},
  volume={36},
  pages={46534--46594},
  year={2023}
}

@inproceedings{wang2024voyager,
  title     = {Voyager: An Open-Ended Embodied Agent with Large Language Models},
  author    = {Wang, Guanzhi and Wang, Yu and Liu, Xinyi and Liu, Shibin and Chen, Yizhou and Zhang, Yiming and Zhao, Jian and Zhang, Tong},
  booktitle = {International Conference on Learning Representations (ICLR)},
  year      = {2024},
  url       = {https://arxiv.org/abs/2305.16291}
}

@inproceedings{zelikman2022star,
  title={Star: Self-taught reasoner},
  author={Zelikman, Eric and Wu, Yuhuai and Goodman, Noah D},
  booktitle={Proceedings of the NIPS},
  volume={22},
  year={2022}
}

@article {HarPeledMender_2006EmbeddingViaFastNets_SIAMJCompute,
    AUTHOR = {Har-Peled, Sariel and Mendel, Manor},
     TITLE = {Fast construction of nets in low-dimensional metrics and their
              applications},
   JOURNAL = {SIAM Journal on Computing},
  FJOURNAL = {SIAM Journal on Computing},
    VOLUME = {35},
      YEAR = {2006},
    NUMBER = {5},
     PAGES = {1148--1184},
      ISSN = {0097-5397,1095-7111},
   MRCLASS = {68W25 (52C17 68P05 68W40)},
  MRNUMBER = {2217141},
MRREVIEWER = {Tamas\ Lengyel},
       DOI = {10.1137/S0097539704446281},
       URL = {https://doi.org/10.1137/S0097539704446281},
}

@article {OferLowDimEmbeddingDoublingSpaces,
    AUTHOR = {Neiman, Ofer},
     TITLE = {Low dimensional embeddings of doubling metrics},
   JOURNAL = {Theory Comput. Syst.},
  FJOURNAL = {Theory of Computing Systems},
    VOLUME = {58},
      YEAR = {2016},
    NUMBER = {1},
     PAGES = {133--152},
      ISSN = {1432-4350,1433-0490},
   MRCLASS = {46B85 (05C10 68U05 68W25)},
  MRNUMBER = {3438642},
       DOI = {10.1007/s00224-014-9567-3},
       URL = {https://doi.org/10.1007/s00224-014-9567-3},
}

@book{villani2009optimal,
  title={Optimal transport: old and new},
  author={Villani, C{\'e}dric and others},
  volume={338},
  year={2009},
  publisher={Springer}
}

@article{BrownPekozRoss2008Coupon,
  title   = {Coupon Collecting},
  author  = {Brown, Mark and Pek{\"o}z, Erol A. and Ross, Sheldon M.},
  journal = {Probability in the Engineering and Informational Sciences},
  volume  = {22},
  number  = {2},
  pages   = {221--229},
  year    = {2008},
  publisher = {Cambridge University Press}
}

@book{jukna2012boolean,
  title={Boolean function complexity: advances and frontiers},
  author={Jukna, Stasys and others},
  volume={27},
  year={2012},
  publisher={Springer}
}

@article{chiang2025transformers,
title={Transformers in Uniform {TC}\${\textasciicircum}0\$},
author={David Chiang},
journal={Transactions on Machine Learning Research},
issn={2835-8856},
year={2025},
url={https://openreview.net/forum?id=ZA7D4nQuQF},
note={}
}

@article{bartlett2005local,
  title={LOCAL RADEMACHER COMPLEXITIES},
  author={Bartlett, Peter L and Bousquet, Olivier and Mendelson, Shahar},
  journal={The Annals of Statistics},
  volume={33},
  number={4},
  pages={1497--1537},
  year={2005}
}

@article{bartlett2002rademacher,
  title={Rademacher and gaussian complexities: Risk bounds and structural results},
  author={Bartlett, Peter L and Mendelson, Shahar},
  journal={Journal of machine learning research},
  volume={3},
  number={Nov},
  pages={463--482},
  year={2002}
}

@article{bartl2025uniform,
  title={Uniform mean estimation via generic chaining},
  author={Bartl, Daniel and Mendelson, Shahar},
  journal={arXiv preprint arXiv:2502.15116},
  year={2025}
}

@article{bartl2025we,
  title={Do we really need the {R}ademacher complexities?},
  author={Bartl, Daniel and Mendelson, Shahar},
  journal={arXiv preprint arXiv:2502.15118},
  year={2025}
}

@article{detering2025learning,
  title={Learning from one graph: transductive learning guarantees via the geometry of small random worlds},
  author={Detering, Nils and Galimberti, Luca and Kratsios, Anastasis and Livieri, Giulia and Neuman, A Martina},
  journal={arXiv preprint arXiv:2509.06894},
  year={2025}
}

@article{aitchison1982statistical,
  title={The statistical analysis of compositional data},
  author={Aitchison, John},
  journal={Journal of the Royal Statistical Society: Series B (Methodological)},
  volume={44},
  number={2},
  pages={139--160},
  year={1982},
  publisher={Wiley Online Library}
}

@article{boyd2021metric,
  title={A metric on directed graphs and Markov chains based on hitting probabilities},
  author={Boyd, Zachary M and Fraiman, Nicolas and Marzuola, Jeremy and Mucha, Peter J and Osting, Braxton and Weare, Jonathan},
  journal={SIAM Journal on Mathematics of Data Science},
  volume={3},
  number={2},
  pages={467--493},
  year={2021},
  publisher={SIAM}
}

@article{chung2005laplacians,
  title={Laplacians and the Cheeger inequality for directed graphs},
  author={Chung, Fan},
  journal={Annals of Combinatorics},
  volume={9},
  number={1},
  pages={1--19},
  year={2005},
  publisher={Springer}
}

@book{ma2021deep,
  title={Deep learning on graphs},
  author={Ma, Yao and Tang, Jiliang},
  year={2021},
  publisher={Cambridge University Press}
}

@article{greenacre2021compositional,
  title={Compositional data analysis},
  author={Greenacre, Michael},
  journal={Annual Review of Statistics and its Application},
  volume={8},
  number={1},
  pages={271--299},
  year={2021},
  publisher={Annual Reviews}
}

@inproceedings{brown2020language,
  title={Language Models are Few-Shot Learners},
  author={Brown, Tom and Mann, Benjamin and Ryder, Nick and Subbiah, Melanie and Kaplan, Jared and Dhariwal, Prafulla and Neelakantan, Arvind and Shyam, Pranav and Sastry, Girish and Askell, Amanda and others},
  booktitle={Advances in Neural Information Processing Systems (NeurIPS)},
  year={2020}
}

@inproceedings{wei2022chain,
  title={Chain-of-thought prompting elicits reasoning in large language models},
  author={Wei, Jason and Wang, Xuezhi and Schuurmans, Dale and Bosma, Maarten and Ichter, Brian and Xia, Fei and Chi, Ed and Le, Quoc V. and Zhou, Denny and others},
  booktitle={Advances in Neural Information Processing Systems (NeurIPS)},
  year={2022}
}

@article{achiam2023gpt,
  title={Gpt-4 technical report},
  author={Achiam, Josh and Adler, Steven and Agarwal, Sandhini and Ahmad, Lama and Akkaya, Ilge and Aleman, Florencia Leoni and Almeida, Diogo and Altenschmidt, Janko and Altman, Sam and Anadkat, Shyamal and others},
  journal={arXiv preprint arXiv:2303.08774},
  year={2023}
}

@article{gemini2023,
  title   = {Gemini: A Family of Highly Capable Multimodal Models},
  author  = {Google DeepMind},
  journal = {arXiv preprint arXiv:2312.11805},
  year    = {2023}
}

@article{kojima2022large,
  title={Large Language Models are Zero-Shot Reasoners},
  author={Kojima, Takeshi and Gu, Shixiang Shane and Reid, Machel and Matsuo, Yutaka and Iwasawa, Yusuke},
  journal={arXiv preprint arXiv:2205.11916},
  year={2022}
}

@article{velivckovic2021neural,
  title={Neural algorithmic reasoning},
  author={Veli{\v{c}}kovi{\'c}, Petar and Blundell, Charles},
  journal={Patterns},
  volume={2},
  number={7},
  year={2021},
  publisher={Elsevier}
}

@article{mcculloch1943logical,
  title={A Logical Calculus of the Ideas Immanent in Nervous Activity},
  author={McCulloch, Warren S. and Pitts, Walter},
  journal={The Bulletin of Mathematical Biophysics},
  volume={5},
  pages={115--133},
  year={1943},
  doi={10.1007/BF02478259}
}

@book{muroga1971threshold,
  title={Threshold Logic and Its Applications},
  author={Muroga, Saburo},
  publisher={Wiley-Interscience},
  address={New York},
  year={1971},
  isbn={0471625302}
}

@book{parberry1994circuit,
  title={Circuit Complexity and Neural Networks},
  author={Parberry, Ian},
  publisher={MIT Press},
  year={1994},
  isbn={9780262161480}
}

@inproceedings{maass1991sigmoid,
  title={On the Computational Power of Sigmoid versus Boolean Threshold Circuits},
  author={Maass, Wolfgang and Schnitger, Georg and Sontag, Eduardo D.},
  booktitle={Proceedings of the 32nd Annual IEEE Symposium on Foundations of Computer Science},
  pages={767--776},
  year={1991},
  doi={10.1109/SFCS.1991.185447}
}

@article{merrill2022saturated,
  title={Saturated Transformers are Constant-Depth Threshold Circuits},
  author={Merrill, William and Sabharwal, Ashish and Smith, Noah A.},
  journal={Transactions of the Association for Computational Linguistics},
  volume={10},
  pages={843--856},
  year={2022},
  doi={10.1162/tacl_a_00493}
}

@inproceedings{velickovic2020neural,
  title={Neural Execution of Graph Algorithms},
  author={Veli{\v{c}}kovi{\'c}, Petar and Ying, Rex and Padovano, Matilde and Hadsell, Raia and Blundell, Charles},
  booktitle={International Conference on Learning Representations},
  year={2020}
}

@inproceedings{velickovic2022clrs,
  title={The {CLRS} Algorithmic Reasoning Benchmark},
  author={Veli{\v{c}}kovi{\'c}, Petar and Badia, Adri{\`a} Puigdom{\`e}nech and Budden, David and Pascanu, Razvan and Banino, Andrea and Dashevskiy, Misha and Hadsell, Raia and Blundell, Charles},
  booktitle={Proceedings of the 39th International Conference on Machine Learning},
  pages={22084--22102},
  year={2022},
  publisher={PMLR}
}

@inproceedings{ibarz2022generalist,
  title={A Generalist Neural Algorithmic Learner},
  author={Ibarz, Borja and Kurin, Vitaly and Papamakarios, George and Nikiforou, Kyriacos and Bennani, Mehdi and Csord{\'a}s, R{\'o}bert and Dudzik, Andrew and Bo{\v{s}}njak, Matko and Vitvitskyi, Alex and Rubanova, Yulia and Deac, Andreea and Bevilacqua, Beatrice and Ganin, Yaroslav and Blundell, Charles and Veli{\v{c}}kovi{\'c}, Petar},
  booktitle={Learning on Graphs Conference},
  year={2022},
  publisher={PMLR}
}

@inproceedings{selsam2018learning,
title={Learning a {SAT} Solver from Single-Bit Supervision},
author={Daniel Selsam and Matthew Lamm and Benedikt B\"{u}nz and Percy Liang and Leonardo de Moura and David L. Dill},
booktitle={International Conference on Learning Representations},
year={2019},
url={https://openreview.net/forum?id=HJMC_iA5tm},
}

@article{DoshiVelezKim2017Interpretability,
  title={Towards a Rigorous Science of Interpretable Machine Learning},
  author={Doshi-Velez, Finale and Kim, Been},
  journal={arXiv preprint arXiv:1702.08608},
  year={2017}
}

@article{Lipton2018Mythos,
  title={The Mythos of Model Interpretability},
  author={Lipton, Zachary C.},
  journal={Communications of the ACM},
  volume={61},
  number={10},
  pages={36--43},
  year={2018}
}

@article{OlahEtAl2020ZoomIn,
  title={Zoom In: An Introduction to Circuits},
  author={Olah, Chris and Cammarata, Nick and Schubert, Ludwig and Goh, Gabriel and Petrov, Michael and Carter, Shan},
  journal={Distill},
  year={2020}
}

@article{ElhageEtAl2021TransformerCircuits,
  title={A Mathematical Framework for Transformer Circuits},
  author={Elhage, Nelson and Nanda, Neel and Olsson, Catherine and Henighan, Tom and Joseph, Nicholas and Mann, Ben and Askell, Amanda and Bai, Yuntao and Chen, Anna and Conerly, Tom and DasSarma, Nova and Drain, Dawn and Ganguli, Deep and Hatfield-Dodds, Zac and Hernandez, Danny and Jones, Andy and Kernion, Jackson and Lovitt, Liane and Ndousse, Kamal and Amodei, Dario and Brown, Tom and Clark, Jack and Kaplan, Jared and McCandlish, Sam and Olah, Chris},
  journal={Transformer Circuits Thread},
  year={2021}
}

@inproceedings{MitchellEtAl2019ModelCards,
  title={Model Cards for Model Reporting},
  author={Mitchell, Margaret and Wu, Simone and Zaldivar, Andrew and Barnes, Parker and Vasserman, Lucy and Hutchinson, Ben and Spitzer, Elena and Raji, Inioluwa Deborah and Gebru, Timnit},
  booktitle={Proceedings of the Conference on Fairness, Accountability, and Transparency},
  pages={220--229},
  year={2019}
}

@article{GebruEtAl2021Datasheets,
  title={Datasheets for Datasets},
  author={Gebru, Timnit and Morgenstern, Jamie and Vecchione, Briana and Vaughan, Jennifer Wortman and Wallach, Hanna and Daum{\'e} III, Hal and Crawford, Kate},
  journal={Communications of the ACM},
  volume={64},
  number={12},
  pages={86--92},
  year={2021}
}

@inproceedings{BuolamwiniGebru2018GenderShades,
  title={Gender Shades: Intersectional Accuracy Disparities in Commercial Gender Classification},
  author={Buolamwini, Joy and Gebru, Timnit},
  booktitle={Proceedings of Machine Learning Research},
  volume={81},
  pages={77--91},
  year={2018}
}

@inproceedings{RajiEtAl2020AccountabilityGap,
  title={Closing the AI Accountability Gap: Defining an End-to-End Framework for Internal Algorithmic Auditing},
  author={Raji, Inioluwa Deborah and Smart, Andrew and White, Rebecca N. and Mitchell, Margaret and Gebru, Timnit and Hutchinson, Ben and Smith-Loud, Jamila and Theron, Daniel and Barnes, Parker},
  booktitle={Proceedings of the 2020 Conference on Fairness, Accountability, and Transparency},
  pages={33--44},
  year={2020}
}

\appendix \label{appendix}

\section{Supplementary preliminaries on Wasserstein concentration} 
\label{appx:prelim}

Let $(\mathscr{X},d_{\mathscr{X}})$ be a metric space, and let $\mathrm{diam}(\mathscr{X}) \eqdef \sup_{x,y} d_{\mathscr{X}}(x,y)$ be its diameter.
For $\alpha\in (0,1]$, we regard $(\mathscr{X}, d_{\mathscr{X}}^{\alpha})$ as the $\alpha$-snowflaked version of $(\mathscr{X}, d_{\mathscr{X}})$, where $d_{\mathscr{X}}^{\alpha}(x,y) \eqdef d_{\mathscr{X}}(x,y)^{\alpha}$.
The $\alpha$-H\"older Wasserstein distance between two probability measures $\lambda$, $\lambda'$ on $\mathscr{X}$ 
is given by (see \citep[Definition 9]{hou2023instance})\footnote{Definition \eqref{eqdef:Wassdistance} is inspired by the fact that setting $\alpha=1$ recovers the dual definition \citep[Remark 6.5]{villani2009optimal} of the Wasserstein $\mathcal{W}_1$ transport distance \citep[Definition~6.1]{villani2009optimal}.} 
\begin{equation} \label{eqdef:Wassdistance}
    \mathcal{W}_{\alpha}(\lambda,\lambda')
    \eqdef 
    \sup_{f\in \mathrm{H}(\alpha,\mathscr{X},1)}\, 
    \mathbb{E}_{X\sim \lambda}[f(X)] - \mathbb{E}_{Y\sim \lambda'}[f(Y)],
\end{equation}
where, for $\mathtt{C}\geq 0$, $\mathrm{H}(\alpha,\mathscr{X},\mathtt{C})$ denotes the set of real-valued $\alpha$-H\"older continuous functions $f$ on $\mathscr{X}$ satisfying 
\begin{equation} \label{stapleton}
    |f(x)-f(y)|\leq \mathtt{C}d_{\mathscr{X}}(x,y)^{\alpha},
\end{equation}
for every $x,y\in\mathscr{X}$.
If $\alpha=1$ then $\mathrm{H}(1,\mathscr{X},\mathtt{C})=\mathrm{Lip}(\mathscr{X},\mathtt{C})$, the set of real-valued $\mathtt{C}$-Lipschitz continuous functions on $\mathscr{X}$. 
Note that definition \eqref{stapleton}, and thus \eqref{eqdef:Wassdistance}, depends on the metric choice. For example, if $\mathscr{X}$ have been equipped with $d_{\mathscr{X}}^{\alpha}$, then \eqref{stapleton} would describe a $\mathtt{C}$-Lipschitz function on $(\mathscr{X},d_{\mathscr{X}}^{\alpha})$. 

The following lemma is the main result of this appendix section and provides the key ingredient in the proof of Theorem~\ref{thrm:main_result}.

\begin{lemma}
\label{lem:New_Convergence__SuperAssouad}
Let $\alpha\in (0,1)$.
Let $(\mathscr{X},d_{\mathscr{X}})$ be a $k$-point metric space, with $k\geq 2$. 
Suppose $1\leq d_{\mathscr{X}}(x,y)$ for every $x\not=y\in\mathscr{X}$.
Let $\lambda$ be a probability measure on $\mathscr{X}$, and let $\lambda^N$ be its associated empirical measure.
Then the following holds for every $\delta\in (0,1)$ and $N\in\N$,
\begin{equation} \label{precious}
    \mathcal{W}_{\alpha}(\lambda,\lambda^N) 
    \lesssim
    \mathrm{diam}(\mathscr{X})^{3\alpha/2} \cdot
    \Big(\frac{1}{\sqrt{N}} + \frac{\sqrt{\log (2/\delta)}}{\sqrt{N}}\Big)
\end{equation}
with probability at least $1-\delta$. 
\end{lemma}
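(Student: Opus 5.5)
Following the strategy outlined in Section~\ref{s:proofstrategy}, the plan has two steps. First, embed the snowflaked space $(\mathscr{X},d_{\mathscr{X}}^{\alpha})$ into $(\mathbb{R},|\cdot|)$ with controlled distortion. Second, push the Wasserstein problem forward to the line, where the optimal $N^{-1/2}$ rate is available.

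\textbf{Step 1: the embedding.} Since $\mathscr{X}$ is finite with minimal separation $1$ and diameter $D\eqdef\mathrm{diam}(\mathscr{X})$, I will build $\varphi:\mathscr{X}\to\mathbb{R}$ by a chain/ordering construction. Order the points along a Hamiltonian path that nearly minimizes length (for instance a DFS traversal of a minimum spanning tree) and place them consecutively on the line, with spacing given by snowflaked consecutive distances. Lower bounds on $|\varphi(x)-\varphi(y)|$ come from the triangle inequality. Upper bounds come from the fact that, at scale $2^{j}$ with $1\le 2^j\le D$, only a bounded number of ``jumps'' are needed.

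The goal is a map satisfying
\[
d_{\mathscr{X}}(x,y)^{\alpha}\le |\varphi(x)-\varphi(y)|\le \kappa\, d_{\mathscr{X}}(x,y)^{\alpha},
\]
with distortion $\kappa\lesssim D^{\alpha/2}$ up to absolute constants. This is where the snowflake and the separation $d\ge 1$ are used: $d^{\alpha}\ge 1$ and $d^{\alpha}\le D^{\alpha}$, so even a crude embedding has distortion at most $D^{\alpha}$. I expect to refine this to $D^{\alpha/2}$ via the Har-Peled--Mendel snowflake embedding / Neiman's low-dimensional constructions cited in Step~1. The exponent bookkeeping is the main obstacle. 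If only distortion $D^{\alpha/2}$ is attained, the extra factor must come from the diameter of the image, and the target exponent $3\alpha/2$ is $\alpha/2$ from distortion plus $\alpha$ from the image diameter. If the sharp embedding proves hard, I would fall back to the crude bound and accept the exponent stated, checking that the image diameter is at most $\kappa D^{\alpha}$.

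\textbf{Step 2: transfer.} Take $f\in \mathrm{H}(\alpha,\mathscr{X},1)$. Then $g\eqdef f\circ\varphi^{-1}$ is $1$-Lipschitz on $\varphi(\mathscr{X})\subset\mathbb{R}$, by the lower bound in the embedding. Extend $g$ to all of $\mathbb{R}$ by McShane extension; it remains $1$-Lipschitz. Therefore
\[
\mathcal{W}_{\alpha}(\lambda,\lambda^N)\le \mathcal{W}_1(\varphi_\#\lambda,\varphi_\#\lambda^N),
\]
and $\varphi_\#\lambda^N$ is exactly the empirical measure of $\varphi_\#\lambda$. On $\mathbb{R}$, $\mathcal{W}_1=\int|F-F_N|$ over an interval $I$ of length $|I|\le \kappa D^{\alpha}$.

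For the expectation, use $\mathbb{E}|F(s)-F_N(s)|\le \sqrt{F(s)(1-F(s))/N}$, which gives $\mathbb{E}\,\mathcal{W}_1\le |I|/(2\sqrt N)$. For the high-probability part, note that $\mathcal{W}_1$ is a function of the $N$ i.i.d.\ samples with bounded differences $|I|/N$. McDiarmid's inequality then yields deviation $|I|\sqrt{\log(2/\delta)/(2N)}$.

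Combining the two pieces gives
\[
\mathcal{W}_{\alpha}(\lambda,\lambda^N)\lesssim \kappa D^{\alpha}\bigl(N^{-1/2}+\sqrt{\log(2/\delta)}\,N^{-1/2}\bigr).
\]
Substituting $\kappa\lesssim D^{\alpha/2}$ produces the factor $\mathrm{diam}(\mathscr{X})^{3\alpha/2}$ claimed in \eqref{precious}. Note that no dependence on $k$ arises anywhere in this argument.
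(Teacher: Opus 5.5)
Your Step~2 is sound and matches what the paper does with Proposition~\ref{prop:con_holder_wass}. It only needs $\varphi$ to be non-contracting, $|\varphi(x)-\varphi(y)|\ge d_{\mathscr{X}}(x,y)^{\alpha}$, and then gives $\mathcal{W}_{\alpha}(\lambda,\lambda^N)\lesssim |I|\,(1+\sqrt{\log(2/\delta)})/\sqrt{N}$, where $|I|$ is the length of $\varphi(\mathscr{X})$.

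The gap is Step~1, and it cannot be closed. Because $d_{\mathscr{X}}\ge 1$, a non-contracting $\varphi$ puts the $k$ points on the line with pairwise gaps at least $1$. So $|I|\ge k-1$ whatever ordering you choose. The paper's normalization with lower constant $\mathtt{R}$ hits the same wall: $\mathtt{R}^{-1}\mathrm{diam}(\varphi_{\alpha,\theta}(\mathscr{X}))\ge k-1$. Hence no embedding with $\kappa\lesssim D^{\alpha/2}$, and therefore $|I|\lesssim D^{3\alpha/2}$, exists once $k\gg D^{3\alpha/2}$.

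Two of your claims are false: that ``even a crude embedding has distortion at most $D^{\alpha}$'', and that only boundedly many jumps occur per scale. In the equilateral space ($d_{\mathscr{X}}\equiv 1$ off the diagonal, $D=1$), all $k-1$ jumps are at scale $1$, and every embedding into $\mathbb{R}$ has distortion at least $k-1$. Even granting $\kappa\le D^{\alpha}$, your fallback would give $D^{2\alpha}$, not $D^{3\alpha/2}$.

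In fact the lemma is false as stated, so no choice of Step~1 can rescue it. Take the equilateral space with $\lambda$ uniform. Then $\mathrm{H}(\alpha,\mathscr{X},1)$ consists of functions with oscillation at most $1$, so $\mathcal{W}_{\alpha}(\lambda,\lambda^N)$ is the total variation distance. That is at least the $\lambda$-mass of the unsampled points, i.e.\ at least $1-N/k$. With $k=2N$ this is at least $1/2$ for every realization, whereas the right-hand side of \eqref{precious} (with $D=1$ and fixed $\delta$) tends to $0$ as $N\to\infty$.

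The paper's proof follows your two-step plan exactly and gets Step~1 from Proposition~\ref{prop:independent embedding}. There it chooses $\theta$ near $1$ so that the Har-Peled--Mendel dimension bound $\lfloor\theta^{-C^*\log_2(\mathtt{M}')}\rfloor$ equals $1$. But that bound is really $\theta^{-O(\log\mathtt{M}')}$ with hidden constants. By the packing argument, the proposition's constants would force $k-1\le(1+1/20)^{3}(1+\theta)^{2}<5$ when $D=1$, so it fails for $k\ge 6$.

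What your path idea does prove correctly is a $k$-dependent bound. Order the points by a depth-first traversal of a minimum spanning tree for $d_{\mathscr{X}}^{\alpha}$, and space consecutive points by $d_{\mathscr{X}}(x_i,x_{i+1})^{\alpha}$. This map is non-contracting by the triangle inequality for the metric $d_{\mathscr{X}}^{\alpha}$. Its image has $|I|\le 2\,\mathrm{MST}_{\alpha}(\mathscr{X})\le 2(k-1)D^{\alpha}$, where $\mathrm{MST}_{\alpha}(\mathscr{X})$ is the weight of that spanning tree. Step~2 then yields $\mathcal{W}_{\alpha}(\lambda,\lambda^N)\lesssim \mathrm{MST}_{\alpha}(\mathscr{X})\,(1+\sqrt{\log(2/\delta)})/\sqrt{N}$. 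Any correct statement along these lines must carry such a dependence on $k$.
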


The proof of Lemma~\ref{lem:New_Convergence__SuperAssouad} relies on two main steps. 
First, we map $(\mathscr{X},d_{\mathscr{X}}^{\alpha})$ bi-Lipschitz-ly to $(\mathbb{R},d_{\infty})$, where $d_{\infty}$ denotes the $\ell^{\infty}$-metric. Second, we apply a known result for the Wasserstein distance $\mathcal{W}_1$ in $(\mathbb{R},d_{\infty})$. These steps are developed in Propositions~\ref{prop:independent embedding} and~\ref{prop:con_holder_wass} below.

\begin{proposition}[Low distortion snowflake-embedding into $\mathbb{R}$]
\label{prop:independent embedding}
Let $\alpha\in (0,1)$.
Let $(\mathscr{X}, d_{\mathscr{X}})$ be a $k$-point metric space with $k\geq 2$. 
Suppose $1\leq d_{\mathscr{X}}(x,y)\leq D$ for $x\not=y\in\mathscr{X}$.
Then there exists a threshold $\theta_{*} \in (0,1)$ depending on $\alpha$ and $\mathscr{X}$ such that the following holds. 
For every $\theta\in (\theta_{*},1)$, there exists a bi-Lipschitz embedding $\varphi_{\alpha,\theta}: (\mathscr{X}, d_{\mathscr{X}}^{\alpha}) \to (\mathbb{R},d_{\infty})$ satisfying
\begin{align} \label{shrinkstretch}
    \nonumber (1+1/20)^{-3/2}(1+\theta)^{-1} D^{-\alpha/2} d_{\mathscr{X}}(x,y)^{\alpha} &\leq |\varphi_{\alpha,\theta}(x) - \varphi_{\alpha,\theta}(y)|
    \\
    &\leq (1+1/20)^{3/2}(1+\theta) d_{\mathscr{X}}(x,y)^{\alpha}.
\end{align}
Particularly, $\mathrm{diam}(\varphi_{\alpha,\theta}(\mathscr{X}))\leq (1+1/20)^{3/2}(1+\theta) D^{\alpha}$.
\end{proposition}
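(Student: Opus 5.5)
The plan is to build $\varphi_{\alpha,\theta}$ in two stages, in the spirit of Step~1 of Section~\ref{s:proofstrategy}. First I would pass to a hierarchical net structure of $(\mathscr{X},d_{\mathscr{X}})$ at scales $2^{j}$, $0\le j\le \lceil\log_2 D\rceil$, using the fast-net construction of \citep{HarPeledMender_2006EmbeddingViaFastNets_SIAMJCompute}. The factors $(1+1/20)^{\pm 3/2}$ in \eqref{shrinkstretch} suggest three successive $(1+1/20)$-distortion steps: net rounding, an Assouad-type scale-wise construction \citep{OferLowDimEmbeddingDoublingSpaces,naor2012assouad}, and a final rescaling. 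The snowflake exponent $\alpha<1$ makes the scale-wise contributions $2^{j\alpha}$ form a geometric series, which is what one uses to control the upper bound.

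Second, I would linearize into $(\mathbb{R},d_\infty)$. I would order the points of $\mathscr{X}$ by a depth-first traversal of the net tree, so that each cluster at scale $2^{j}$ occupies a contiguous block of the ordering. I would then place the points on $\mathbb{R}$ with consecutive gaps chosen according to the scale at which the two points separate in the tree, taking a gap of order $2^{j\alpha}$ at scale $j$. For the upper bound, the distance $|\varphi(x)-\varphi(y)|$ is a sum of gaps inside the smallest common cluster. I would bound that sum by a geometric series dominated by $(1+\theta)\,d_{\mathscr{X}}(x,y)^{\alpha}$. The threshold $\theta_*$ is chosen so that the ratio of this series, which depends on $\alpha$ and on the branching of the net tree, is absorbed into $(1+\theta)$. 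For the lower bound, I would observe that any two distinct points are separated by at least one gap of size $\gtrsim 1$. Normalizing the whole embedding by $D^{-\alpha/2}$ then splits the distortion symmetrically between the two sides. The diameter claim follows by applying the upper bound in \eqref{shrinkstretch} with $d_{\mathscr{X}}\le D$.

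The main obstacle is reconciling the two inequalities when $k$ is large compared with $D$. The embedded points are pairwise at least $c\,D^{-\alpha/2}$ apart and all lie in an interval of length at most $C D^{\alpha}$. Hence necessarily $k\lesssim D^{3\alpha/2}$. For instance, an equilateral $k$-point space with $D=1$ cannot satisfy \eqref{shrinkstretch} once $k\ge 6$. So the construction can only succeed if the cardinality, or the doubling constant per scale, is controlled by $D$. In the application to $G_\Gamma$ this would have to come from the structure of the hitting-probability metric, where $D$ grows with $\h$ and $\#\Gamma\approx\nu^{\h}$. I would therefore first try to verify that the net-tree branching at each scale is bounded by $2^{3\alpha/2}$ per doubling of scale. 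If that fails, I would add the hypothesis $k\lesssim D^{3\alpha/2}$ explicitly, or let $\theta_*$ and the constants depend on $k$. I expect this step, rather than the ordering argument, to determine whether the statement holds as written.
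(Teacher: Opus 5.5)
Your counting argument is correct, and it is decisive: the proposition is false as stated, so neither your net-tree construction nor any other argument can prove it.

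\textbf{The counterexample.} Write $S=(1+1/20)^{3/2}(1+\theta)$. The lower bound in \eqref{shrinkstretch} separates distinct images by at least $S^{-1}D^{-\alpha/2}$. The upper bound confines all images to an interval of length at most $S\,\mathrm{diam}(\mathscr{X})^{\alpha}\le SD^{\alpha}$. Hence $k-1\le S^{2}D^{3\alpha/2}<4.64\,D^{3\alpha/2}$ for every $\theta\in(0,1)$. For the equilateral $k$-point space with $D=1$ this forces $k\le 5$, so for $k\ge 6$ no threshold $\theta_*\in(0,1)$ exists. Letting $\theta_*$ depend on $\mathscr{X}$, which the statement already allows, does not help, since $1+\theta<2$ regardless.

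In your own plan, the step that cannot be completed is bounding the sum of gaps inside a cluster by a geometric series. The number of finer-scale gaps multiplies with the branching of the net tree, and the decay $2^{j\alpha}$ cannot absorb an arbitrary doubling constant. The branching bound you propose to verify is therefore false for general $\mathscr{X}$ and could only enter as an extra hypothesis.

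\textbf{Where the paper's proof fails.} The failure is at the last map of the chain \eqref{dream}. Neiman's $(1+1/20)$-embedding into $(\mathbb{R}^{m_\alpha},d_\infty)$ and the passage $d_\infty\to d_\infty^{1/2}$ at cost $D^{\alpha/2}$ are not the problem. The problem is reading the Har-Peled--Mendel theorem as $m_\theta\le\lfloor\theta^{-C^*\log_2\mathtt{M}'}\rfloor$ in \eqref{desiredim}, which \eqref{thresholdM} then pushes to $m_\theta=1$ for $\theta>\theta'_*$. That theorem gives dimension $\theta^{-O(\dim)}$ with unspecified constants, which is a statement about small $\theta$. Dimension $1$ with distortion $1+\theta<2$ is in fact unattainable here. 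For equilateral $\mathscr{X}$, \eqref{hoola} shows that all pairwise distances in $(\iota\circ\psi_\alpha(\mathscr{X}),d_\infty^{1/2})$ agree up to a factor $1+1/20$. But any $k$ points of $\mathbb{R}$ have largest-to-smallest distance ratio at least $k-1$. So every embedding into $\mathbb{R}$ has distortion at least $(k-1)/(1+1/20)>2>1+\theta$ once $k\ge 4$. The branch $\alpha=1/2$ fails the same way.

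\textbf{On your proposed repairs.} The hypothesis $k\lesssim D^{3\alpha/2}$ is necessary but not sufficient. Your argument applied to any $Y\subseteq\mathscr{X}$ gives $\#Y-1\le S^{2}\,\mathrm{diam}(Y)^{\alpha}D^{\alpha/2}$. Take a unit equilateral cluster of $n$ points plus one point at distance $D$ from all of them, with $5D^{\alpha/2}<n\le D^{3\alpha/2}$ and $D$ large. This space passes the global count but violates the bound for $Y$ equal to the cluster. So a correct version needs a scale-by-scale growth hypothesis, or $k$-dependent distortion constants.

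The latter forfeits size independence, and some $k$-dependence is unavoidable downstream anyway. For the equilateral space, $\mathcal{W}_\alpha(\lambda,\lambda^N)$ is the total variation distance. For uniform $\lambda$ and $N\ge k$ this is of order $\sqrt{k/N}$ with high probability, so Lemma~\ref{lem:New_Convergence__SuperAssouad} also fails as stated.

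Your worry about the application is justified as well. There $k=\#\Gamma=(\nu^{\h}-1)/(\nu-1)$, while $\mathrm{diam}(\mathscr{D})\lesssim\nu^{\h}$. The Aitchison part of the metric is bounded independently of $\h$, since $Q_\eta$ takes only $\m$ values. Take $D=\mathrm{diam}(\mathscr{D})$, as in the proof of Lemma~\ref{lem:New_Convergence__SuperAssouad}. Then the necessary condition $k-1<4.64\,\mathrm{diam}(\mathscr{D})^{3\alpha/2}$ fails for large $\h$ whenever $\alpha<2/3$. In particular it fails for the suggested choice $\alpha\asymp\h^{-1}$.
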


\begin{proposition} \label{prop:con_holder_wass}
Let $\mathscr{X}$ be a compact subset of $\R$. 
Let $\lambda$ be a probability measure on $\mathscr{X}$, and let $\lambda^N$ be its empirical measure. 
Then for all $t > 0$ and all $N\ge 4$, 
\begin{equation*}
    \mathbb{P}\Big(\big|\mathcal{W}_1(\lambda,\lambda^N) - \mathbb{E}[\mathcal{W}_1(\lambda,\lambda^N)] \big| \geq t \Big)  \leq 2e^{-\frac{2Nt^2}{\mathrm{diam}(\mathscr{X})^2}},
\end{equation*}
and 
\begin{equation} \label{eq:criticaldiv}
    \mathbb{E}[\mathcal{W}_1(\lambda,\lambda^N)] \leq \frac{\sqrt{2}\mathrm{diam}(\mathscr{X})}{\sqrt{N}}.
\end{equation}
\end{proposition}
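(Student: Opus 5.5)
The plan is to treat the two claims separately. The deviation inequality will come from McDiarmid's bounded-differences inequality. The expectation bound will come from the one-dimensional representation of $\mathcal{W}_1$ through cumulative distribution functions.

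First I would reduce to $\mathcal{W}_1$ on $\mathbb{R}$ in its standard form. With $\alpha=1$, the dual definition \eqref{eqdef:Wassdistance} coincides with the Kantorovich--Rubinstein dual of the transport cost $\mathcal{W}_1$ \citep[Remark 6.5]{villani2009optimal}. Hence $\mathcal{W}_1$ is a genuine metric on probability measures on $\mathscr{X}$, and it satisfies the triangle inequality. Since $\mathscr{X}\subset\mathbb{R}$ is compact, set $a=\min\mathscr{X}$ and $b=\max\mathscr{X}$, so that $b-a=\mathrm{diam}(\mathscr{X})$. The classical identity on the line then gives
\[
\mathcal{W}_1(\lambda,\lambda^N)=\int_a^b |F(x)-F_N(x)|\,dx ,
\]
where $F$ is the distribution function of $\lambda$ and $F_N(x)=\frac1N\sum_{i=1}^N\mathbf{1}_{\{X_i\le x\}}$ is the empirical one.

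For the concentration inequality, write $\mathcal{W}_1(\lambda,\lambda^N)=\Psi(X_1,\dots,X_N)$. Suppose one sample $X_i$ is replaced by $X_i'$, producing the empirical measure $\tilde\lambda^N$. The triangle inequality gives
\[
|\Psi-\tilde\Psi|\le \mathcal{W}_1(\lambda^N,\tilde\lambda^N).
\]
The two empirical measures share all atoms except one of mass $1/N$, which moves from $X_i$ to $X_i'$. Transporting that single atom shows $\mathcal{W}_1(\lambda^N,\tilde\lambda^N)\le |X_i-X_i'|/N\le \mathrm{diam}(\mathscr{X})/N$. So $\Psi$ has bounded differences $c_i=\mathrm{diam}(\mathscr{X})/N$. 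McDiarmid's inequality then yields
\[
\mathbb{P}\big(|\Psi-\mathbb{E}\Psi|\ge t\big)\le 2\exp\!\Big(-\frac{2t^2}{\sum_i c_i^2}\Big)=2\exp\!\Big(-\frac{2Nt^2}{\mathrm{diam}(\mathscr{X})^2}\Big),
\]
which is the first claim. Measurability of $\Psi$ is immediate, because it is Lipschitz in the samples by the same estimate.

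For the expectation bound I would use Fubini together with the CDF formula. For each fixed $x$, $NF_N(x)$ is Binomial$(N,F(x))$. Cauchy--Schwarz therefore gives
\[
\mathbb{E}|F(x)-F_N(x)|\le\sqrt{F(x)(1-F(x))/N}\le \frac{1}{2\sqrt N}.
\]
Integrating over $[a,b]$ yields $\mathbb{E}\,\mathcal{W}_1(\lambda,\lambda^N)\le \mathrm{diam}(\mathscr{X})/(2\sqrt N)$, which is stronger than \eqref{eq:criticaldiv}; the assumption $N\ge4$ is not needed for this argument.

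The only step requiring some care is the identification of the dual definition with the CDF formula. This is where compactness is used: it ensures that all $1$-Lipschitz test functions are bounded, so the suprema are finite. It also ensures that $F$ and $F_N$ agree outside $[a,b]$, which justifies restricting the integral to that interval. The remaining steps are routine.
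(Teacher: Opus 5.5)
Your proof is correct and self-contained. The paper does not prove this proposition itself: it imports it as \citep[Lemma~B.2]{detering2025learning}, a one-dimensional, $d_\infty$-adapted variant of \citep[Lemma~16]{hou2023instance}.

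For the deviation bound the two routes almost certainly coincide. McDiarmid with bounded differences $\mathrm{diam}(\mathscr{X})/N$ produces exactly the exponent $2Nt^2/\mathrm{diam}(\mathscr{X})^2$.

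The difference is in the expectation bound. The cited lemma sits in a framework of dimension-dependent estimates for empirical measures in $\mathbb{R}^m$ (the $N^{-1/m}$, $\log N/\sqrt N$, $N^{-1/2}$ hierarchy recalled in Section~\ref{s:proofstrategy}), specialized to $m=1$. That is presumably where the constant $\sqrt2$ and the restriction $N\ge 4$ come from. Its advantage is that the same template extends to $m\ge 2$, where no distribution-function representation exists.

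Your route, through $\mathcal{W}_1=\int_a^b|F-F_N|\,dx$ and the pointwise binomial variance bound, is special to the line. In exchange it is elementary, needs no restriction on $N$, and gives the sharper bound $\mathbb{E}[\mathcal{W}_1(\lambda,\lambda^N)]\le \mathrm{diam}(\mathscr{X})/(2\sqrt N)$, which implies \eqref{eq:criticaldiv}. Substituted into \eqref{eq:BenoitConcentration2}, it would improve only the absolute constants in Lemma~\ref{lem:New_Convergence__SuperAssouad}, not the rate.
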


Proposition~\ref{prop:con_holder_wass} is a key technical variant of \citep[Lemma~16]{hou2023instance}, adapted to the setting of $(\mathbb{R},d_{\infty})$, and is established in \citep[Lemma~B.2]{detering2025learning}.
Proposition~\ref{prop:independent embedding} presents a bi-Lipschitz embedding result of potential independent interest for $(\mathscr{X}, d_{\mathscr{X}}^{\alpha})$ into $(\mathbb{R},d_{\infty})$, whose proof is given below. 
The proof of Lemma~\ref{lem:New_Convergence__SuperAssouad} then follows immediately.\\

\begin{proof}[Proof of Proposition~\ref{prop:independent embedding}]
Assume first $\alpha\not=1/2$.
We construct a bi-Lipschitz embedding $\varphi_{\alpha,\theta}: (\mathscr{X},d_{\mathscr{X}}^{\alpha}) \to (\mathbb{R},d_{\infty})$ via the composition of the following bi-Lipschitz maps:
\begin{equation} \label{dream}
    (\mathscr{X},d_{\mathscr{X}}^{\alpha}) \to (\mathbb{R}^{m_{\alpha}}, d_{\infty}) \to (\mathbb{R}^{m_{\alpha}}, d_{\infty}^{1/2}) \to (\mathbb{R},d_{\infty}),
\end{equation}
where $m_{\alpha}$ is to be determined.
Some of the results invoked below are stated in terms of the \emph{doubling constant}\footnote{$(\mathscr{X},d_{\mathscr{X}})$ is doubling with the doubling constant $\mathtt{M}\in\mathbb{N}$, if for every $r\geq 0$ and every $x\in\mathscr{X}$, the closed ball $B(x,r)\eqdef \{y\in\mathscr{X}: d_{\mathscr{X}}(x,y)\leq r\}$ can be covered by some $\mathtt{M}$ closed balls $B(x_1,r/2),\dots,B(x_{\mathtt{M}},r/2)$, i.e. $B(x,r) \subset \bigcup_{i=1}^{\mathtt{M}} B(x_i,r/2)$, and if $\mathtt{M}$ is the smallest such number.}. 
Since we work exclusively with $k$-point metric spaces with $k\geq 2$, we only require the fact that their doubling constants are at least $2$ and at most $k$ (see e.g. \citep[Example~2.2]{detering2025learning}).
For the first embedding map in \eqref{dream}, we appeal to the $\ell^{\infty}$ version of Assouad's Embedding Theorem due to~\citep[Theorem 3]{OferLowDimEmbeddingDoublingSpaces}. 
This result ensures that for every $\alpha\in (0,1)$, there exists a bi-Lipschitz embedding $\psi_{\alpha}:(\mathscr{X},d_{\mathscr{X}}^{\alpha})\to (\mathbb{R}^{m_{\alpha}},d_{\infty})$ with distortion at most $1+1/20$ and
\begin{equation} \label{eq:firstrange}
    m_{\alpha} = \bigg\lceil \frac{\mathtt{M}^{6+\log_2 20}}{\alpha(1-\alpha)}\bigg\rceil,
\end{equation}
where $\mathtt{M}\geq 2$ denotes the doubling constant of $(\mathscr{X},d_{\mathscr{X}})$.
The exponent of $\mathtt{M}$ in \eqref{eq:firstrange} can be deduced from the proof of~\citep[Theorem 3]{OferLowDimEmbeddingDoublingSpaces} together with~\citep[Proposition 2]{OferLowDimEmbeddingDoublingSpaces}.
In particular, under either the upper- or lower-normalized distortion convention\footnote{In the upper-normalized convention, $(1+\theta)^{-1}d(x,y) \leq d(f(x),f(y))\leq d(x,y)$, while in the lower-normalized convention, $d(x,y) \leq d(f(x),f(y))\leq (1+\theta) d(x,y)$, where $1+\theta$ is a given distortion.}, we obtain
\begin{equation} \label{hoola}
    (1+1/20)^{-1} d_{\mathscr{X}}(x,y)^{\alpha}\leq \|\psi_{\alpha}(x) - \psi_{\alpha}(y)\|_{\infty} \leq (1+1/20) d_{\mathscr{X}}(x,y)^{\alpha},
\end{equation}
and therefore,
\begin{equation} \label{diamexp1}
    \mathrm{diam}(\psi_{\alpha}(\mathscr{X})) \leq (1+1/20) \mathrm{diam}(\mathscr{X})^{\alpha}.
\end{equation}
Turning to the second embedding in \eqref{dream}, we consider the identity map $\iota: (\psi_{\alpha}(\mathscr{X}), d_{\infty}) \to (\psi_{\alpha}(\mathscr{X}), d_{\infty}^{1/2})$. 
Since $(\psi_{\alpha}(\mathscr{X}), d_{\infty})$ is a $k$-point metric space, this map is automatically bi-Lipschitz.
Indeed, we can estimate its bi-Lipschitz constants as follows.
From \eqref{hoola} and that $1\leq d_{\mathscr{X}}(x,y) \leq D$ for $x\not=y\in\mathscr{X}$,
\begin{equation*}
     (1+1/20)^{-1} D^{-\alpha} \leq \|\psi_{\alpha}(x) - \psi_{\alpha}(y)\|_{\infty}^{-1} \leq (1+1/20).
\end{equation*}
Thus,
\begin{align} \label{midLip}
    \nonumber (1+1/20)^{-1/2} D^{-\alpha/2} \|\psi_{\alpha}(x) - \psi_{\alpha}(y)\|_{\infty} &\leq \|\iota\circ \psi_{\alpha}(x) - \iota\circ\psi_{\alpha}(y)\|_{\infty}^{1/2} \\
    \nonumber &= \|\psi_{\alpha}(x) - \psi_{\alpha}(y)\|_{\infty} \|\psi_{\alpha}(x) - \psi_{\alpha}(y)\|_{\infty}^{-1/2} \\ 
    &\leq (1+1/20)^{1/2}\|\psi_{\alpha}(x) - \psi_{\alpha}(y)\|_{\infty}.
\end{align}
Moreover, \eqref{diamexp1}, \eqref{midLip} imply
\begin{equation} \label{diamexp2}
    \mathrm{diam}(\iota\circ\psi_{\alpha}(\mathscr{X})) \leq (1+1/20)^{3/2} D^{\alpha}.
\end{equation}
Let $\mathtt{M}'$ be the doubling constant of $(\psi_{\alpha}(\mathscr{X}), d_{\infty})$. Although it can be verified from \eqref{hoola}, \eqref{midLip} that $\mathtt{M}'$ is at most a constant power of $\mathtt{M}$, we only need that $\mathtt{M}'\geq 2$, which holds since $(\psi_{\alpha}(\mathscr{X}), d_{\infty})$ is a $k$-point metric space with $k\geq 2$.
Continuing to the last embedding in \eqref{dream}, we invoke~\citep[Theorem 6.6]{HarPeledMender_2006EmbeddingViaFastNets_SIAMJCompute} and its proof, which implies that for every $\theta\in (0,1)$\footnote{The fourth paragraph of the proof of~\citep[Theorem 6.6]{HarPeledMender_2006EmbeddingViaFastNets_SIAMJCompute} implicitly assumes that the distortion $1+\theta$ lies in $(1,2)$.}, there exists a bi-Lipschitz embedding $\psi_{\theta}:(\iota\circ\psi_{\alpha}(\mathscr{X}), d_{\infty}^{1/2})\to (\mathbb{R}^{m_{\theta}},d_{\infty})$ with distortion at most $1+\theta$, where
\begin{equation} \label{desiredim}
    1 \le m_{\theta} \le \lfloor\theta^{-C^*\log_2(\mathtt{M}')}\rfloor,
\end{equation}
and $C^*> 1$ is an absolute constant\footnote{This follows from the proof of \citep[Theorem 6.6]{HarPeledMender_2006EmbeddingViaFastNets_SIAMJCompute}.}. 
Let $\theta'_{*} \eqdef 1/2^{1/C^*\log_2(\mathtt{M}')}$. We see that if $\theta\in (\theta'_{*},1)$,
\begin{equation} \label{thresholdM}
    \theta^{C^*\log_2(\mathtt{M}')} > \Big(\frac{1}{2^{1/C^*\log_2(\mathtt{M}')}}\Big)^{C^*\log_2(\mathtt{M}')} = \frac{1}{2},
\end{equation}
since $\mathtt{M}'\geq 2$.
Thus, for $\theta$ in this range, \eqref{desiredim} implies that $m_{\theta}=\lfloor\theta^{-C^*\log_2(\mathtt{M}')}\rfloor=1$ and yields a bi-Lipschitz embedding $\psi_{\theta}: (\iota\circ\psi_{\alpha}(\mathscr{X}), d_{\infty}^{1/2}) \to (\mathbb{R},d_{\infty})$ that is
\begin{align} \label{lastLip}
    \nonumber (1+\theta)^{-1} \|\iota\circ\psi_{\alpha}(x) - \iota\circ\psi_{\alpha}(y)\|_{\infty}^{1/2} &\leq |\psi_{\theta}\circ\iota\circ\psi_{\alpha}(x) - \psi_{\theta}\circ\iota\circ\psi_{\alpha}(y)| \\
    &\leq (1+\theta) \|\iota\circ\psi_{\alpha}(x) - \iota\circ\psi_{\alpha}(y)\|_{\infty}^{1/2}.
\end{align}
Let $\varphi_{\alpha,\theta} \eqdef \psi_{\theta}\circ\iota\circ\psi_{\alpha}$.
Then combining \eqref{hoola}, \eqref{midLip}, \eqref{lastLip}, we obtain \eqref{shrinkstretch}, while combining \eqref{diamexp2}, \eqref{lastLip} yields the final claim on the diameter.

Assume now $\alpha=1/2$. We directly embed $(\mathscr{X},d_{\mathscr{X}}^{1/2})$ into $(\mathbb{R},d_{\infty})$ by adapting the last embedding in \eqref{dream}. 
In particular, we let $\theta_{*} \eqdef 1/2^{1/C^*\log_2(\mathtt{M})}$, where $\mathtt{M}\geq 2$ is the doubling constant of $(\mathscr{X},d_{\mathscr{X}})$. Arguing as in \eqref{desiredim}, \eqref{thresholdM}, if $\theta\in (\theta_{*},1)$, then $m_{\theta}=\lfloor\theta^{-C^*\log_2(\mathtt{M})}\rfloor=1$. 
This yields, by \citep[Theorem~6.6]{HarPeledMender_2006EmbeddingViaFastNets_SIAMJCompute}, a bi-Lipschitz embedding $\varphi_{1/2,\theta} \eqdef \psi_{\theta}: (\mathscr{X},d_{\mathscr{X}}^{1/2})\to (\mathbb{R},d_{\infty})$.
\end{proof}

\begin{proof}[Proof of Lemma~\ref{lem:New_Convergence__SuperAssouad}]
The proof is adapted from the argument in \citep[Proposition~4.1]{detering2025learning}.
We recall Proposition~\ref{prop:independent embedding}, which states that for each $\theta\in (\theta_{*},1)$, there exist a bi-Lipschitz embedding $\varphi_{\alpha,\theta}: (\mathscr{X}, d_{\mathscr{X}}^{\alpha}) \to (\mathbb{R}, d_{\infty})$, such that
\begin{equation} \label{eq:diamstretch}
    \mathrm{diam}(\varphi_{\alpha,\theta}(\mathscr{X}))\leq \mathtt{S} \mathrm{diam}(\mathscr{X})^{\alpha},
\end{equation}
where $\mathtt{S}\eqdef (1+1/20)^{3/2}(1+\theta)$.
We also set $\mathtt{R} \eqdef (1+1/20)^{-3/2}(1+\theta)^{-1} \mathrm{diam}(\mathscr{X})^{-\alpha/2}$.
Define $\gamma\eqdef (\varphi_{\alpha,\theta})_{\#}(\lambda)$ and $\gamma^N\eqdef (\varphi_{\alpha,\theta})_{\#}\lambda^N$, which are probability measures on the $k$-point metric space $(\varphi_{\alpha,\theta}(\mathscr{X}),d_{\infty}) \subset (\mathbb{R},d_{\infty})$. 
By combining \eqref{eq:diamstretch} with Proposition~\ref{prop:con_holder_wass}, we obtain
\begin{equation} \label{eq:BenoitConcentration2}
    \mathbb{E}[\mathcal{W}_1(\gamma,\gamma^N)]
    \le  
    \frac{\sqrt{2} \mathrm{diam}(\varphi_{\alpha,\theta}(\mathscr{X}))}{\sqrt{N}} \leq \frac{\sqrt{2}\mathtt{S}\mathrm{diam}(\mathscr{X})^{\alpha}}{\sqrt{N}}
\end{equation}
and for each $t>0$,
\begin{equation} \label{eq:BenoitConcentration1}
    \mathbb{P}\Big(\big|\mathcal{W}_1(\gamma,\gamma^N) - \mathbb{E}[\mathcal{W}_1(\gamma,\gamma^N)]\big|
    \geq t \Big)
    \le 
    2 e^{-\frac{2N t^2}{\mathtt{S}^2\mathrm{diam}(\mathscr{X})^{2\alpha}}} \leq 2 e^{-\frac{N t^2}{16\mathrm{diam}(\mathscr{X})^{2\alpha}}}.
\end{equation}
We translate \eqref{eq:BenoitConcentration2}, \eqref{eq:BenoitConcentration1} into expressions of Wasserstein distances between $\lambda$, $\lambda^N$ as follows.
By Proposition~\ref{prop:independent embedding}, $\varphi_{\alpha,\theta}$ is $\mathtt{S}$-Lipschitz on $(\mathscr{X}, d_{\mathscr{X}}^{\alpha})$, and its inverse $\varphi_{\alpha,\theta}^{-1}$ is $\mathtt{R}^{-1}$-Lipschitz on $(\varphi_{\alpha,\theta}(\mathscr{X}),d_{\infty})$. 
It follows that, if $f\in \mathrm{H}(\alpha,\mathscr{X},1)$ \eqref{stapleton}, then $f\circ \varphi_{\alpha,\theta}^{-1} \in \mathrm{H}(1,\varphi_{\alpha,\theta}(\mathscr{X}),\mathtt{R}^{-1})$,
and conversely, if $f\circ \varphi_{\alpha,\theta}^{-1}\in \mathrm{H}(1,\varphi_{\alpha,\theta}(\mathscr{X}),1)$, then $f \in \mathrm{H}(\alpha,\mathscr{X},\mathtt{S})$.
Indeed, for $x,y\in \varphi_{\alpha,\theta}(\mathscr{X})$,
\begin{equation*}
    |f\circ \varphi_{\alpha,\theta}^{-1}(x) - f\circ \varphi_{\alpha,\theta}^{-1}(y)|
    \le 
    d_{\mathscr{X}}(\varphi_{\alpha,\theta}^{-1}(x), \varphi_{\alpha,\theta}^{-1}(y))^{\alpha}\\
    \le \mathtt{R}^{-1}| x-y |,
\end{equation*}
and for $x,y\in \mathscr{X}$,
\begin{equation*}
    |f(x) - f(y)| = |f\circ \varphi_{\alpha,\theta}^{-1}( \varphi_{\alpha,\theta}(x)) - f\circ \varphi_{\alpha,\theta}^{-1}(\varphi_{\alpha,\theta}(y))|
    \le 
    |\varphi_{\alpha,\theta}(x) - \varphi_{\alpha,\theta}(y)| \\
    \le \mathtt{S} d_{\mathscr{X}}(x,y)^{\alpha}.
\end{equation*}
Therefore, by a change of variables, we get
\begin{equation} \label{eq:Wassineqs}
    \mathcal{W}_{\alpha}(\lambda,\lambda^N) \le \mathtt{R}^{-1}\mathcal{W}_1(\gamma,\gamma^N) \quad\text{ and }\quad
    \mathcal{W}_1(\gamma,\gamma^N) \le \mathtt{S}\mathcal{W}_{\alpha}(\lambda,\lambda^N).
\end{equation}
By definition, $\mathtt{S}, \mathtt{S}^{-1}\asymp 1$, and $\mathtt{S}, \mathtt{S}^{-1}\leq \mathtt{R}^{-1}$, and $\mathtt{R}^{-1}\asymp \mathrm{diam}(\mathscr{X})^{\alpha/2}$.
Thus, on the one hand, combining \eqref{eq:BenoitConcentration2}, \eqref{eq:Wassineqs} yields
\begin{equation} \label{preforwardexpectation}
    \mathbb{E}[\mathcal{W}_{\alpha}(\lambda,\lambda^N)] 
    \leq 
    \frac{C\mathrm{diam}(\mathscr{X})^{3\alpha/2}}{\sqrt{N}},
\end{equation}
for some absolute $C>0$.
On the other hand, combining \eqref{eq:BenoitConcentration2}, \eqref{eq:BenoitConcentration1}, \eqref{eq:Wassineqs} allows us to derive, for $t>0$, 
\begin{align} \label{eq:forwardconcentrationgen1}
    \nonumber \mathcal{W}_{\alpha} (\lambda,\lambda^N) - \mathbb{E}[\mathcal{W}_{\alpha}(\lambda,\lambda^N)] 
    & \le
    \mathtt{R}^{-1}\mathcal{W}_1(\gamma,\gamma^N) -
    \mathtt{S}^{-1} \mathbb{E} [\mathcal{W}_1(\gamma,\gamma^N)]\\
    \nonumber & \le
    (\mathtt{R}^{-1}-\mathtt{S}^{-1})\mathbb{E} [\mathcal{W}_1(\gamma,\gamma^N)]
    + \mathtt{R}^{-1}t \\
    &\leq \frac{C\mathrm{diam}(\mathscr{X})^{3\alpha/2}}{\sqrt{N}} + \mathrm{diam}(\mathscr{X})^{\alpha/2} t,
\end{align}
for some absolute $C>0$, which happens with probability at least $1- 2e^{-N t^2/(16\mathrm{diam}(\mathscr{X})^{2\alpha}})$.
Writing $\mathcal{W}_{\alpha} (\lambda,\lambda^N) = \mathbb{E}[\mathcal{W}_{\alpha}(\lambda,\lambda^N)] + \mathcal{W}_{\alpha} (\lambda,\lambda^N) - \mathbb{E}[\mathcal{W}_{\alpha}(\lambda,\lambda^N)]$ and combining with \eqref{preforwardexpectation}, \eqref{eq:forwardconcentrationgen1}, we obtain the desired conclusion.
\end{proof}

\section{Proof of main theorem} \label{appx:main}

\begin{proof}[Proof of Theorem~\ref{thrm:main_result}]
The proof proceeds along the lines of \citep[Theorem~3.1]{detering2025learning}.
Define a metric on $G_\mathrm{sc}\times \Delta^{\circ}_{\m}$ by
\begin{equation*}
    d_{G_\mathrm{sc}\times \Delta^{\circ}_{\m}}((v,\mathbf{p}),(w,\mathbf{q})) \eqdef \max\{d_{G_\mathrm{sc}}(v,w), d_A(\mathbf{p},\mathbf{q})\},
\end{equation*}
where $d_{G_\mathrm{sc}}$ denotes the hitting probability metric \eqref{hittingmetric} on $G_\mathrm{sc}$ and $d_A$ the Aitchison metric \eqref{Ametric}.
Let $\mathscr{D} \eqdef \{(v,Q_{\eta}(v)): v\in \Gamma\}$ inherit the induced metric.
For $h\in\mathcal{H}$, we associate $J_h: \Gamma\times \Delta^{\circ}_{\m} \to\mathbb{R}_{\geq 0}$, defined to be $J_h(v,\mathbf{p}) \eqdef J(\pi_v\circ h(\mathbf{x}), \mathbf{p})^{1/2}$.
Then $J_h|_\mathscr{D}$ is a function of $v\in \Gamma$---indeed,
\begin{equation*} 
    (J_h|_\mathscr{D})(v,\mathbf{p}) = J_h(v,Q_{\eta}(v))= J(\pi_v\circ h(\mathbf{x}),Q_{\eta}(v))^{1/2}.
\end{equation*}
By recalling \eqref{eqdef:emprisk}, \eqref{eqdef:truerisk} and that $\mu = (\mathbbm{1}\times Q_{\eta})_{\#}\mu_{[k]}$ \eqref{iidrv} and $\mu^N$ are both supported on $\Gamma$, we interpret
\begin{equation*}
    \mathcal{R}_{\mathbf{x},t}^{\alpha}(h) = \mathbb{E}_{(V,\mathbf{P})\sim\mu} [J_h(V,\mathbf{P})] \quad\text{ and }\quad \mathcal{R}_{\mathbf{x},t}^{\alpha, N}(h) = \mathbb{E}_{(V,\mathbf{P})\sim\mu^N} [J_h(V,\mathbf{P})].
\end{equation*}
Observe the following. Suppose $J_h|_{\mathscr{D}}$ is $\mathtt{C}_J^{\star}$-Lipschitz, for $\mathtt{C}_J^{\star}>0$, i.e.
\begin{equation} \label{eq:perculiarLip}
    |J(\pi_v\circ h(\mathbf{x}), Q_{\eta}(v)) - J(\pi_w\circ h(\mathbf{x}), Q_{\eta}(w))| \leq \mathtt{C}_{J}^{\star} d_{G_\mathrm{sc}}(v,w).
\end{equation}
Then by invoking Kantorovich-Rubinstein duality \citep[Remark 6.5 and Theorem 5.10(i)]{villani2009optimal}, we obtain
\begin{equation} \label{eq:RxW1/2}
    |\mathcal{R}_{\mathbf{x},t}^{\alpha}(h) - \mathcal{R}_{\mathbf{x},t}^{\alpha, N}(h)|
    \leq (\mathtt{C}_{J}^{\star})^{\alpha}\mathcal{W}_{\alpha}(\mu,\mu^N),
\end{equation}
where the Wasserstein distance $\mathcal{W}_{\alpha}$ is given in \eqref{eqdef:Wassdistance}.
Applying Lemma~\ref{lem:New_Convergence__SuperAssouad} to the metric space $(\mathscr{D}, d_{G_\mathrm{sc}\times \Delta^{\circ}_{\m}}|_{\mathscr{D}})$, we deduce that for every $\delta\in (0,1)$,
\begin{equation}
\label{eq:prf_main__cases1}
    \mathcal{W}_{\alpha}(\mu,\mu^N) \lesssim \mathrm{diam}(\mathscr{D})^{3\alpha/2} \cdot
    \Big(\frac{1}{\sqrt{N}} + \frac{\sqrt{\log (2/\delta)}}{\sqrt{N}}\Big)
\end{equation}
holds with probability at least $1-\delta$.
Substituting \eqref{eq:prf_main__cases1} into \eqref{eq:RxW1/2} and taking the supremum over $h\in \mathcal{H}$ gives
\begin{equation} \label{genpunch}
    \sup_{h\in\mathcal{H}} |\mathcal{R}_{\mathbf{x},t}^{\alpha}(h) - \mathcal{R}_{\mathbf{x},t}^{\alpha, N}(h)|
    \lesssim (\mathtt{C}_J^{\star}\mathrm{diam}(\mathscr{D})^{3/2})^{\alpha} \cdot
    \Big(\frac{1}{\sqrt{N}} + \frac{\sqrt{\log (2/\delta)}}{\sqrt{N}}\Big),
\end{equation}
with the same probability.
Thus to complete the argument, we estimate $\mathrm{diam}(\mathscr{D})$ and $\mathtt{C}_J^{\star}$ in \eqref{genpunch}. 
For the latter, if for every $h\in\mathcal{H}$ and every $v,w\in\Gamma$, $\|\pi_v\circ h(\mathbf{x}) - \pi_w\circ h(\mathbf{x})\|_A \leq \mathtt{C}_{\mathcal{H}} d_{G_\mathrm{sc}}(v,w)$, and if $\|Q_{\eta}(v) - Q_{\eta}(w)\|_A \leq \mathtt{C}_{Q_{\eta}} d_{G_\mathrm{sc}}(v,w)$, for some $\mathtt{C}_{\mathcal{H}}, \mathtt{C}_{Q_{\eta}}>0$, then for $(v,Q_{\eta}(v)), (w,Q_{\eta}(w))\in \mathscr{D}$, 
\begin{align} \label{ellLipschitz1}
    \nonumber |J(\pi_v\circ h(\mathbf{x}),Q_{\eta}(w)) - J(\pi_w\circ h(\mathbf{x}), Q_{\eta}(w)))|
    &\leq \mathtt{C}_J \|\pi_v\circ h(\mathbf{x}) - \pi_w\circ h(\mathbf{x})\|_A \\
    &\leq \mathtt{C}_J \mathtt{C}_{\mathcal{H}} d_{G_\mathrm{sc}}(v,w),
\end{align}
and 
\begin{align} \label{ellLipschitz2}
    |J(\pi_v\circ h(\mathbf{x}),Q_{\eta}(v)) - J(\pi_v\circ h(\mathbf{x}),Q_{\eta}(w))| 
    \leq \mathtt{C}_J \|Q_{\eta}(v) - Q_{\eta}(w)\|_A \leq \mathtt{C}_J \mathtt{C}_{Q_{\eta}} d_{G_\mathrm{sc}}(v,w).
\end{align}
Combining \eqref{eq:perculiarLip}, \eqref{ellLipschitz1}, \eqref{ellLipschitz2}, together with the triangle inequality, we arrive at an upper bound: 
\begin{equation} \label{ubound}
    \mathtt{C}_J^{\star} \leq \mathtt{C}_J (\mathtt{C}_{\mathcal{H}}+\mathtt{C}_{Q_{\eta}}).
\end{equation}
It remains to estimate the quantities $\mathrm{diam}(\mathscr{D})$, $\mathtt{C}_{\mathcal{H}}$, $\mathtt{C}_{Q_{\eta}}$, which are provided in the following proposition.

\begin{proposition} \label{prop:est}
It holds that
\begin{enumerate}
    \item[(i)] $\mathrm{diam}(\mathscr{D}) \lesssim \max\{\nu^{\h}, K_{\h}\}$,
    \item[(ii)] $\mathtt{C}_{Q_{\eta}} \leq K_{\h}$,
    \item[(iii)] $\mathtt{C}_{\mathcal{H}} \leq 2(\mathtt{m}-1)^{1/2}\big(\frac{3+\nu}{2}\big)^{\mathrm{p} (L-1)} \prod_{l=1}^L \beta_l$.
\end{enumerate}
\end{proposition}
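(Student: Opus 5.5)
The plan has three parts, one per item, plus a lower bound on the metric that (ii) needs.

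\emph{Lower bound on the metric.} Both (ii) and the use of Lemma~\ref{lem:New_Convergence__SuperAssouad} require $d_{G_\mathrm{sc}}(v,w)\ge 1$ for $v\ne w$. I will check this directly from \eqref{boydspecial}. Since $[E_G]_{v,w}=\phi(v)[Q_G]_{v,w}\le \phi(v)$, and the Perron vector of $G_\mathrm{sc}$ is spread over $k\ge 3$ vertices, we should get $\phi(v)\le e^{-1}$ for every $v$. Then $-\log [E_G]_{v,w}\ge 1$. If this fails for small configurations, I will rescale the metric by a constant, which is absorbed by $\lesssim$.

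\emph{Item (ii).} Item (ii) then follows at once from \eqref{eq:boundedcomp}: $\|Q_\eta(v)-Q_\eta(w)\|_A\le K_\h\le K_\h\, d_{G_\mathrm{sc}}(v,w)$.

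\emph{Item (i).} Since the product metric is a max,
\[
\mathrm{diam}(\mathscr D)\le \max\Big\{\max_{v,w\in\Gamma} d_{G_\mathrm{sc}}(v,w),\,K_\h\Big\}.
\]
It therefore suffices to show $d_{G_\mathrm{sc}}(v,w)\lesssim \nu^{\h}$, i.e.
\[
\phi(v)\,\mathbb P_v[\tau_w<\tau_v]\ge e^{-C\nu^{\h}}.
\]
I will compute $\phi$ explicitly from the structure of $G_\mathrm{sc}$. The chain moves up the tree deterministically, since every tree node has out-degree one. At the root it moves to $T_0$. Each tape node $T_i$ has out-degree two: to $T_{i+1}$ and to $v_{i+1}$ (the last tape node has only $v_{\nu^\h}$). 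Hence $\phi(T_i)\asymp 2^{-i}$ up to normalization, and mass at level $j$ of the tree is aggregated from its subtree. The smallest stationary mass is then of order $2^{-\nu^\h}$, so $-\log\phi(v)\lesssim\nu^\h$.

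For the hitting probability, I will exhibit an explicit path from $v$ to $w$ avoiding $v$. The path goes up to the root, then to $T_0$, along the tape to the appropriate $T_i$, then up from the base node into $w$'s branch. Only tape steps carry probability $1/2$, so there are at most $\nu^\h$ such factors, giving $\mathbb P_v[\tau_w<\tau_v]\ge 2^{-\nu^\h}$. Together these give $d_{G_\mathrm{sc}}(v,w)\lesssim\nu^\h$.

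\emph{Item (iii).} The map $\mathrm{ilr}^{-1}$ is an isometry, so it suffices to bound
\[
\|f_\mathrm{GCN}(G_\Gamma,\mathbf x)_v-f_\mathrm{GCN}(G_\Gamma,\mathbf x)_w\|_2\le \sqrt{\m-1}\,\|\cdot\|_\infty,
\]
and then use $d_{G_\mathrm{sc}}\ge1$ to reduce to bounding
\[
2\max_v\|f_\mathrm{GCN}(G_\Gamma,\mathbf x)_v\|_\infty .
\]
Iterating \eqref{eq:GCNcompute} gives
\[
\|\mathbf H_{l+1}\|_{\infty}\le\beta_{l+1}\,\|\Delta_{G_\Gamma}^{\mathrm p}\|_{\op}\,\|\mathbf H_l\|_\infty .
\]
I assume $\sigma(0)=0$; otherwise I add a constant, absorbed by $\lesssim$. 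With $\|\mathbf x\|_\infty\le1$ and the final layer linear, this yields the factor $\|\Delta_{G_\Gamma}\|_{\op}^{\mathrm p(L-1)}\prod_l\beta_l$.

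The main obstacle is showing $\|\Delta_{G_\Gamma}\|_{\op}\le(3+\nu)/2$. Here I will use \eqref{digraphLap}. Each row has diagonal entry $\phi(i)\le1$, and $\Phi P$ contributes $\phi(i)\sum_j P_{ij}=\phi(i)$. The transpose part contributes $\sum_j\phi(j)P_{ji}$. In a tree each node has at most $\nu$ in-neighbours and one out-neighbour, so this is at most $\nu\max\phi$. The row sum is therefore at most
\[
\phi(i)+\tfrac12\big(\phi(i)+\nu\max\phi\big)\le \tfrac{3+\nu}{2}.
\]
One caveat must be handled. $G_\Gamma$ as an induced subgraph is not strongly connected, so I will interpret $\phi$ and $P$ as those inherited from $G_\mathrm{sc}$ restricted to $\Gamma$. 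The same row-sum argument applies verbatim, since all entries are bounded by probabilities.
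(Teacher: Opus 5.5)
Your proposal is correct. For (ii) and (iii) it follows the paper's argument; for (i) it takes a different and more direct route.

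\textbf{Items (ii) and (iii).} The paper also gets $d_{G_\mathrm{sc}}\ge 1$ on $\Gamma$ from $[E_{G_\mathrm{sc}}]_{v,w}\le\phi(v)\le\phi(\mathtt{r})\le 1/3$. It also bounds $\|\Delta_\Gamma\|_{\op}\le\|\Delta_{G_\mathrm{sc}}\|_{\op}\le\|\Phi_{G_\mathrm{sc}}\|_{\op}\bigl(1+(\|P_{G_\mathrm{sc}}\|_{\op}+\|P_{G_\mathrm{sc}}^\top\|_{\op})/2\bigr)$ with $\|P_{G_\mathrm{sc}}^\top\|_{\op}=\nu$, which is your row-sum count. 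Your reading of $\Delta_\Gamma$ as the principal submatrix of $\Delta_{G_\mathrm{sc}}$ is what the paper tacitly does: the root has out-degree $0$ in $G_\Gamma$, so $P_{G_\Gamma}$ is undefined. The paper's step $f_\mathrm{GCN}(\Gamma,\mathbf{0})=0$ silently needs $\sigma(0)=0$, exactly as you assume. Your row count can even be finished exactly: by stationarity $\sum_j\phi(j)P_{G_\mathrm{sc}}(j,i)=\phi(i)$, so row $i$ of $\Delta_{G_\mathrm{sc}}$ has absolute sum $2\phi(i)\le 2\phi(\mathtt{r})<2/3$. The factor $(3+\nu)/2$ is therefore far from tight.

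\textbf{Item (i).} The paper computes $\phi$ and the hitting probabilities exactly. It asserts that the diameter of all of $G_\mathrm{sc}$ is attained at $(v_{\nu^{\h}-1},v_{\nu^{\h}})$ or $(T_{\nu^{\h}-1},T_{\nu^{\h}-2})$, evaluates $E_{G_\mathrm{sc}}$ there, and then restricts to $\Gamma$.

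You instead lower-bound $[E_{G_\mathrm{sc}}]_{v,w}$ uniformly on $\Gamma$ by a product of two crude bounds:
\begin{itemize}
\item $\phi(v)\ge\min_i\phi(v_i)=\phi(\mathtt{r})2^{1-\nu^{\h}}$, with $\phi(\mathtt{r})\ge(3+\h)^{-1}$;
\item a single-path bound $\mathbb{P}_v[\tau_w<\tau_v]\ge 2^{-\nu^{\h}}$.
\end{itemize}
This sidesteps the paper's extremality claim ("all remaining pairs yield smaller distances"), which is only sketched. All you need is a short case split. If $w$ is an ancestor of $v$, the climb from $v$ hits $w$ surely. Otherwise take a leaf $v_{i+1}$ below $w$; its climb to $w$ stays among descendants of $w$ and so misses $v$.

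What you give up is sharpness. The paper's exact values, together with Remark~\ref{rem:trudim}, show that $\nu^{\h}$ is the true order of $\mathrm{diam}(\Gamma)$. Your bounds neither show this nor need it.

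\textbf{Two small repairs.}
\begin{itemize}
\item The bound $\phi(v)\le e^{-1}$ does not follow from the Perron vector being ``spread over $k\ge 3$ vertices.'' The bidirected path $a\leftrightarrow c\leftrightarrow b$ has $\phi(c)=1/2>e^{-1}$. It does follow from your own computation in (i), which gives $\phi(v)\le\phi(\mathtt{r})=(3+\h-2^{1-\nu^{\h}})^{-1}<1/3$.
\item Neither of your fallbacks is available. Items (ii) and (iii) are exact inequalities for the fixed metric $d_{G_\mathrm{sc}}$, so you may not rescale the metric. A $\sigma(0)$-dependent term also cannot be absorbed into the stated constant. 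So $\sigma(0)=0$ must be a standing hypothesis, as it implicitly is in the paper.
\end{itemize}
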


The proof of Proposition~\ref{prop:est} is given below.
An application of \eqref{ubound} and Proposition~\ref{prop:est} to \eqref{genpunch} then gives \eqref{mainres} as desired.
\end{proof}

\begin{proof}[Proof of Proposition~\ref{prop:est}]
For notational convenience, we write the $(v,w)$-entry of a matrix $A$ as $A(v,w)$ for all matrices under consideration below.\\

\begin{proof}[Proof of Proposition~\ref{prop:est}(i)]
Suppose
\begin{equation} \label{prehand}
    \mathrm{diam}(G_\mathrm{sc}) \lesssim \nu^\h,
\end{equation}
which yields $\mathrm{diam}(\Gamma)\leq \mathrm{diam}(G_\mathrm{sc})\lesssim \nu^{\h}$.
Since \eqref{eq:boundedcomp} holds and since $\nu\geq 2$, it follows that 
\begin{equation*}
    \mathrm{diam}(\mathscr{D})\leq \max\{\mathrm{diam}(\Gamma),K_{\h}\} \lesssim \max\{\nu^{\h}, K_{\h}\},
\end{equation*}
proving part (i).
Thus in what follows, we establish \eqref{prehand}.
Although one can verify that $\mathrm{diam}(\Gamma)\asymp \nu^\h$, see Remark~\ref{rem:trudim}, the computation of $\mathrm{diam}(G_\mathrm{sc})$ is more straightforward.

Recall the identification \eqref{convenient} of $V_\mathrm{sc}$ with the index set $[k]$, under which $v_1,v_2,\dots,v_{\nu^{\h}}$ denote the base nodes of the computation tree $\mathtt{B}_{\nu}$, with $v_i$ connected to the tape position $T_{i-1}$. 
We also recall our convention of mixed notation: nodes corresponding to tape positions are still referred to by $T_i$, and the tree root by $\mathtt{r}$.
We begin with an estimate of the Perron vector $\phi$ of the irreducible stochastic transition matrix $P_{G_\mathrm{sc}}$ \eqref{transition}. 
By stationary, $\phi^{\top} P_{G_\mathrm{sc}} = \phi^{\top}$, that is
\begin{equation} \label{station}
    \phi(v) = \sum_{w: (w,v)\in E_\mathrm{sc}} \phi(w) P_{G_\mathrm{sc}}(w,v).
\end{equation}
If all the nodes $w$ in \eqref{station} have only one outgoing edge $(w,v)$ in $G_\mathrm{sc}$---meaning, $P_{G_\mathrm{sc}}(w,v)=1$---then \eqref{station} reduces to
\begin{equation} \label{nonslit}
    \phi(v) = \sum_{w: (w,v)\in E_\mathrm{sc}} \phi(w).
\end{equation}
By construction, the nodes $v$ satisfying \eqref{nonslit} are precisely 
\begin{equation*}
    S \eqdef V_\mathrm{sc}\setminus\{T_1,\dots,T_{\nu^{\h}-1},v_1,\dots,v_{\nu^{\h}-1}\}= \Gamma\cup\{T_0\}.
\end{equation*}
In particular, inductive application of \eqref{nonslit} across the computation tree layers, together with the positivity of $\phi$, gives
\begin{equation} \label{trivial}
    \phi(v) \leq \phi(\mathtt{r}) \quad\text{ for }\quad v\in V_{\nu}.
\end{equation}
In fact, one can argue that $\phi(v)\leq\phi(\mathtt{r})$, for every $v\in G_\mathrm{sc}$, but this is not needed immediately.
Moreover, for $v\in V_\mathrm{sc}\setminus S$, we have $P_{G_\mathrm{sc}}(w,v)=1/2$ for all $(w,v)\in E_\mathrm{sc}$ appearing in \eqref{station}, and hence
\begin{equation} \label{slit}
    \phi(v) = \sum_{w: (w,v)\in E_\mathrm{sc}} \phi(w)/2.
\end{equation}
Applying \eqref{nonslit} to $T_0$, and noting that there is a single outgoing edge $(\mathtt{r},T_0)$, we obtain
\begin{equation} \label{origin}
    \phi(T_0) = \phi(\mathtt{r}).
\end{equation}
Using \eqref{origin}, we invoke \eqref{slit} to $v\in V_\mathrm{sc}\setminus S=\{T_1,\dots,T_{\nu^{\h}-1},v_1,\dots,v_{\nu^{\h}-1}\}$, which yields
\begin{equation} \label{inductive1}
    \phi(T_i) = \phi(v_i) = \phi(T_{i-1})/2= \phi(\mathtt{r})/2^i \quad\text{ for }\quad i=1,2,\dots,\nu^{\h}-1.
\end{equation}
At the same time, applying \eqref{nonslit} to $v_{\nu^{\h}}$, and using that $(T_{\nu^{\h}-1},v_{\nu^{\h}})$ is the unique outgoing edge, leads to
\begin{equation} \label{inductive2}
    \phi(v_{\nu^{\h}}) = \phi(\mathtt{r})/2^{\nu^{\h}-1}.
\end{equation}
From \eqref{nonslit}, \eqref{inductive1}, \eqref{inductive2}, and the fact that $\nu\geq 2$, it follows that for every node $v$ at computation layer $l=1$, we have
\begin{equation} \label{reduce}
    \frac{\phi(\mathtt{r})}{2^{\nu^{\h}-2}} \leq \frac{2^{\nu-1}}{2^{\nu^{\h}-1}} \cdot \phi(\mathtt{r}) =\sum_{i=0}^{\nu-2} \frac{\phi(\mathtt{r})}{2^{\nu^{\h}-1-i}} + \frac{\phi(\mathtt{r})}{2^{\nu^{\h}-1}} \leq \phi(v) \leq \sum_{i=1}^{\nu} \frac{\phi(\mathtt{r})}{2^i} = \Big(1- \frac{1}{2^{\nu}}\Big) \cdot \phi(\mathtt{r}).
\end{equation}
Proceeding inductively through the higher layers, and noting \eqref{trivial}, we gather for $v$ in computation layer $l=1,2,\dots,\h-1$, 
\begin{equation} \label{sum2}
    \frac{\nu^{l-1}}{2^{\nu^{\h}-2}} \cdot \phi(\mathtt{r}) \leq \phi(v) \leq \min\Big\{\Big(1- \frac{1}{2^{\nu}}\Big) \cdot \nu^{l-1}, 1\Big\} \cdot \phi(\mathtt{r}).
\end{equation}
In view of \eqref{inductive1}, \eqref{inductive2}, \eqref{reduce}, \eqref{sum2}, the smallest values of $\phi(v)$ occur when $v=T_{\nu^{\h}-1}, v_{\nu^{\h}-1}, v_{\nu^{\h}}$. 
Next, we derive estimates for several key entries of $Q_{G_\mathrm{sc}}$ \eqref{QG}.
We observe the following.
Starting from $\mathtt{r}$, a probability mass is injected into the graph. The mass first traverses the edge $(\mathtt{r},T_0)$. At each tape position $T_i$, for $i=0,1,\dots,\nu^{\h}-2$, the outgoing mass is split uniformly between two outgoing edges $(T_i,T_{i+1})$ and $(T_i,v_{i+1})$.
At the tape position $T_{\nu^{\h}-1}$, the outgoing mass travels along the one outgoing edge $(T_{\nu^{\h}-1},v_{\nu^{\h}})$ in $G_\mathrm{sc}$.
Consequently, the mass reaching $T_i$ equals $1/2^i$, and the mass exiting to the base node $v_{i+1}$ equals $1/2^{i+1}$, for $i=0,1,\dots,\nu^{\h}-2$, while the base node $v_{\nu^{\h}}$ receives mass $1/2^{\nu^{\h}-1}$. Equivalently,
\begin{equation} \label{masstravel}
    \begin{alignedat}{4}
        &\mathbb{P}(\mathtt{r}\to T_i) &&= 1/2^i \quad\text{ for }\quad i=1,2,\dots,\nu^{\h}-1, \quad\text{ and }\quad &&\mathbb{P}(\mathtt{r}\to T_0) &&= 1,\\
        &\mathbb{P}(\mathtt{r}\to v_i) &&= 1/2^i \quad\text{ for }\quad i=1,2,\dots,\nu^{\h}-1,
        \quad\text{ and }\quad &&\mathbb{P}(\mathtt{r}\to v_{\nu^{\h}}) &&= 1/2^{\nu^{\h}-1}.
    \end{alignedat}
\end{equation}
From each base node, the mass is then forwarded deterministically along its single outgoing edge to its single parent node and eventually reaches $\mathtt{r}$.
Now every excursion starting from a base node $v_i$ must pass through $\mathtt{r}$, and a return to $v_j$ \emph{can occur only} after exiting $\mathtt{r}$.
Thus, the event of hitting $v_j$ before returning to $v_i$ for the first time after leaving $v_i$ is equivalent to the event that an excursion initiated at $\mathtt{r}$ reaches $v_j$ before $v_i$. 
In particular, 
\begin{equation} \label{basetravel}
    \begin{alignedat}{5}
        &Q_{G_\mathrm{sc}}(v_i,v_j) &&= \mathbb{P}[\tau_{v_j} &&< \tau_{v_i}|X_0 = v_i] &&= \mathbb{P}(\mathtt{r}\to v_j) &&= 1/2^j \quad\text{ for }\quad j=1,2,\dots,\nu^{\h}-1, \\
        &Q_{G_\mathrm{sc}}(v_i,v_{\nu^{\h}}) &&= \mathbb{P}[\tau_{v_{\nu^{\h}}} &&< \tau_{v_i}|X_0 = v_i] &&= \mathbb{P}(\mathtt{r}\to v_{\nu^{\h}}) &&= 1/2^{\nu^{\h}-1}.
    \end{alignedat}
\end{equation}
Similarly, starting at $T_i$, the probability of hitting $T_j$ before returning to $T_i$, for $j<i$, is given by
\begin{align} \label{tapetravel}
    \nonumber 
    Q_{G_\mathrm{sc}}(T_i,T_j) &= \mathbb{P}\Big[\mathtt{r}\to T_j \Big| \bigcup_{l=i+1}^{\nu^{\h}} (T_i \to v_l) \text{ and } (v_l\to \mathtt{r})\Big] \mathbb{P}\Big[\bigcup_{l=i+1}^{\nu^{\h}} (T_i \to v_l) \text{ and } (v_l\to \mathtt{r})\Big] \\
    &= \mathbb{P}(\mathtt{r}\to T_j);
\end{align}
note that the complement event $\{\bigcap_{l=i+1}^{\nu^{\h}} (T_i \not\to v_l) \text{ or } (v_l \not\to \mathtt{r})\}$ cannot happen. 
All of the preceding discussion identifies two relevant candidates for the graph diameter of $G_\mathrm{sc}$: the distance between $v_{\nu^{\h}-1}$ and $v_{\nu^{\h}}$, and the distance between $T_{\nu^{\h}-2}$ and $T_{\nu^{\h}-1}$. All remaining pairs either involve larger values of $\phi$ or admit strictly larger probabilities $Q_\mathrm{sc}$, and therefore yield smaller distances.
For the first candidate distance, it follows from \eqref{boydspecial}, \eqref{inductive1}, \eqref{basetravel} that
\begin{equation} \label{first}
    E_{G_\mathrm{sc}}(v_{\nu^{\h}-1},v_{\nu^{\h}}) = \phi(v_{\nu^{\h}-1})Q_{G_\mathrm{sc}}(v_{\nu^{\h}-1},v_{\nu^{\h}}) = \phi(\mathtt{r})/2^{2(\nu^{\h}-1)}.
\end{equation}
For the second candidate, using \eqref{boydspecial}, \eqref{inductive1}, \eqref{masstravel}, \eqref{tapetravel}, we obtain
\begin{equation} \label{second}
    E_{G_\mathrm{sc}}(T_{\nu^{\h}-1},T_{\nu^{\h}-2}) = \phi(T_{\nu^{\h}-1})Q_{G_\mathrm{sc}}(T_{\nu^{\h}-1},T_{\nu^{\h}-2}) = \phi(\mathtt{r})/2^{2\nu^{\h}-3}.
\end{equation}
To calculate $\phi(\mathtt{r})$, we apply \eqref{nonslit}, \eqref{origin}, \eqref{inductive1}, and the normalization \eqref{Perronsum} to get
\begin{alignat*}{2} 
    1 = \sum_{v\in V_\mathrm{sc}} \phi(v) &= \phi(\mathtt{r})\cdot \Big(2+\sum_{i=1}^{\nu^{\h}-1} \frac{1}{2^i} \Big) + \sum_{l=0}^{\h-1} \sum_{v: v\text{ in layer } l} \phi(v) && \\
    &= \phi(\mathtt{r})\cdot \Big(3 - \frac{1}{2^{\nu^{\h}-1}} \Big) + \sum_{l=0}^{\h-1} \sum_{v: v\text{ in layer } l+1} \phi(v) 
    &&= \phi(\mathtt{r})\cdot \Big(3 - \frac{1}{2^{\nu^{\h}-1}} \Big) + \h\phi(\mathtt{r})\\
    & &&= \phi(\mathtt{r})\cdot \Big(3 - \frac{1}{2^{\nu^{\h}-1}} + \h \Big).
\end{alignat*}
Consequently, $\h^{-1} \lesssim \phi(\mathtt{r}) \leq 1/3$. 
Therefore, by definition \eqref{hittingmetric} and \eqref{first}, \eqref{second}, we conclude
\begin{align*}
    \mathrm{diam}(G_\mathrm{sc}) &= \max\{d_{G_\mathrm{sc}}(v_{\nu^{\h}-1},v_{\nu^{\h}}), d_{G_\mathrm{sc}}(T_{\nu^{\h}-1},T_{\nu^{\h}-2}) \}\\
    &= \max\big\{\log\big(E_{G_\mathrm{sc}}(v_{\nu^{\h}-1},v_{\nu^{\h}})^{-1}\big), \log\big(E_{G_\mathrm{sc}}(T_{\nu^{\h}-1},T_{\nu^{\h}-2})^{-1}\big) \big\}\lesssim \nu^{\h} + \log\h \lesssim \nu^{\h},
\end{align*}
which is \eqref{prehand}.
\end{proof}

\begin{remark} \label{rem:trudim}
When restricted to $\Gamma$, $\mathrm{diam}(\Gamma)$ is realized among pairs of computation nodes at layer $l=1$. For any two such nodes $v,w$, 
\begin{equation*}
    E_{G_\mathrm{sc}}(v,w) = \phi(v)\mathbb{P}\Big(\bigcup_{w': w' \text{ child of } w} (\mathtt{r}\to w')\Big).
\end{equation*}
Let $v$ be the parent of $v_{\nu^{\h}-\nu+1}, \dots, v_{\nu^{\h}}$ and $w$ the parent of $v_{\nu^{\h}-2\nu+1}, \dots, v_{\nu^{\h}-\nu}$ in the base layer.
Observing that, from \eqref{masstravel}
\begin{equation*}
    \mathbb{P}\Big(\bigcup_{w': w' \text{ child of } w} (\mathtt{r}\to w')\Big) \leq \mathbb{P}(\mathtt{r}\to T_{\nu^{\h}-2\nu}) = 1/2^{\nu^{\h}-2\nu}
\end{equation*}
and that the second inequality in \eqref{reduce} is attainable at $v$, we derive $E_{G_\mathrm{sc}}(v,w)\in\mathcal{O}(2^{-2\nu^{\h}+3\nu})$. Therefore, $\mathrm{diam}(\Gamma)\gtrsim \nu^{\h}$.
Thus, from this perspective, passing to the larger graph $G_\mathrm{sc}$ does not change the order of the diameter.
\end{remark}

\begin{proof}[Proof of Proposition~\ref{prop:est}(ii)]
By definition \eqref{boydspecial}, \eqref{hittingmetric}, for any $v\not=w\in\Gamma$, 
\begin{equation} \label{premin}
    \min_{v\not=w\in\Gamma} d_{G_\mathrm{sc}}(v,w) = \min_{v\not=w\in\Gamma} \log\Big(\frac{1}{\phi(v)Q_{G_\mathrm{sc}}(v,w)}\Big).
\end{equation}
Here, the value $Q_{G_\mathrm{sc}}(v,w)$ can be as large as $1$, which occurs when $w$ is the sole parent of $v$ in the computation tree. 
It then follows from \eqref{trivial}, \eqref{sum2} and the noted boundedness of $\phi(\mathtt{r})$ that $\phi(v)Q_{G_\mathrm{sc}}(v,w) \leq 1/3$. 
Combined with \eqref{premin}, this implies $\min_{v\not=w\in\Gamma} d_{G_\mathrm{sc}}(v,w) \geq 1$. 
Consequently, by \eqref{eq:boundedcomp}, for any $v\not=w\in\Gamma$,
\begin{equation*} 
    \|Q_{\eta}(v) - Q_{\eta}(w)\|_A \leq K_{\h} = \frac{K_{\h} d_{G_\mathrm{sc}}(v,w)}{d_{G_\mathrm{sc}}(v,w)} \leq \frac{K_{\h} d_{G_\mathrm{sc}}(v,w)}{\min_{v\not=w\in\Gamma}d_{G_\mathrm{sc}}(v,w)} \leq K_{\h} d_{G_\mathrm{sc}}(v,w).
\end{equation*}
Setting $\mathtt{C}_{Q_{\eta}}=K_{\h}$ gives part (ii).
\end{proof}

\begin{proof}[Proof of Proposition~\ref{prop:est}(iii)]
Recall from Definition~\ref{defn:GGCN}, $f_\mathrm{GCN}$ is given in terms of a $\mathrm{p}$-hop graph convolution, using the Laplacian $\Delta_{\Gamma}$, a $1$-Lipschitz activation function $\sigma$, and network parameters $(W_1,\dots,W_L)$ with network sizes $(\beta_1,\dots,\beta_L)$.
Recall further that the network inputs lie in $\{0,1\}^{s_{\nu,\h}}$ and the outputs take values in $\mathbb{R}^{(\m-1)\times s_{\nu,\h}}$.
Then from~\eqref{eq:GCNcompute}, 
\begin{equation} \label{beginop}
    \|f_\mathrm{GCN}(\Gamma,\cdot)\|_{\op} \leq \|\Delta_{\Gamma}\|_{\op}^{\mathrm{p} (L-1)} \cdot \prod_{l=1}^L \|W_l\|_{\op} \leq \|\Delta_{\Gamma}\|_{\op}^{\mathrm{p} (L-1)} \cdot \prod_{l=1}^L \beta_l.
\end{equation}
By definition of the operator norm
and definition~\eqref{digraphLap}, we deduce (see also Remark~\ref{rem:restrict} below)
\begin{equation} \label{deltaop}
    \|\Delta_{\Gamma}\|_{\op} \leq \|\Delta_{G_\mathrm{sc}}\|_{\op} \leq \|\Phi_{G_\mathrm{sc}}\|_{\op}\cdot \Big( 1+ \frac{\|P_{G_\mathrm{sc}}\|_{\op}+\|P_{G_\mathrm{sc}}^{\top}\|_{\op}}{2} \Big).
\end{equation}
By the row-stochasticity of $P_{G_\mathrm{sc}}$ \eqref{transition}, we have $\|P_{G_\mathrm{sc}}\|_{\op}=1$. However, $\|P_{G_\mathrm{sc}}^{\top}\|_{\op}>1$; particularly,
\begin{equation} \label{unbalance}
    \|P_{G_\mathrm{sc}}^{\top}\|_{\op} = \max_{v\in V_\mathrm{sc}} \sum_{w: (w,v)\in E_\mathrm{sc}} \frac{1}{D_{G_\mathrm{sc}}(w,w)} = \nu,
\end{equation}
where the maximum is achieved at any computation node. 
From definition and \eqref{Perronsum}, we have that $\|\Phi_{G_\mathrm{sc}}\|_{\op}\leq 1$.
Substituting these estimates, together with \eqref{deltaop}, \eqref{unbalance}, back into \eqref{beginop}, yields
\begin{equation} \label{finalop}
     \|f_\mathrm{GCN}(\Gamma,\cdot)\|_{\op} \leq \Big(\frac{3+\nu}{2}\Big)^{\mathrm{p} (L-1)} \cdot \prod_{l=1}^L \beta_l,
\end{equation}
which implies, for any $\mathbf{x}_1, \mathbf{x}_2\in\{0,1\}^{s_{\nu,\h}}$,
\begin{equation} \label{finalop1}
    \|f_\mathrm{GCN}(\Gamma,\mathbf{x}_1) - f_\mathrm{GCN}(\Gamma,\mathbf{x}_2)\|_{\infty} \leq \Big(\frac{3+\nu}{2}\Big)^{\mathrm{p} (L-1)} \cdot \prod_{l=1}^L \beta_l \|\mathbf{x}_1 - \mathbf{x}_2\|_{\infty}.
\end{equation}
By Definition~\ref{defn:GGCN}, $\pi_v\circ f_\mathrm{GCN}(\Gamma,\mathbf{0})=\pi_w\circ f_\mathrm{GCN}(\Gamma,\mathbf{0})=0$.
Thus, for $v\not=w\in\Gamma$,
\begin{multline*}
    \|\pi_v\circ f_\mathrm{GCN}(\Gamma,\mathbf{x}) - \pi_w\circ f_\mathrm{GCN}(\Gamma,\mathbf{x})\|_{\infty} \\
    \leq \|\pi_v\circ f_\mathrm{GCN}(\Gamma,\mathbf{x}) - \pi_v\circ f_\mathrm{GCN}(\Gamma,\mathbf{0})\|_{\infty} + \|\pi_w\circ f_\mathrm{GCN}(\Gamma,\mathbf{0}) - \pi_w\circ f_\mathrm{GCN}(\Gamma,\mathbf{x})\|_{\infty}.
\end{multline*}
Therefore, from \eqref{finalop1} and the fact that $\min_{v\not=w\in\Gamma}d_{G_\mathrm{sc}}(v,w)\geq 1$,
\begin{align} \label{finalop2}
    \nonumber \|\pi_v\circ f_\mathrm{GCN}(\Gamma,\mathbf{x}) - \pi_w\circ f_\mathrm{GCN}(\Gamma,\mathbf{x})\|_{\infty} &\leq 2\Big(\frac{3+\nu}{2}\Big)^{\mathrm{p} (L-1)} \cdot \prod_{l=1}^L \beta_l \cdot \frac{d_{G_\mathrm{sc}}(v,w)}{\min_{v\not=w\in\Gamma}d_{G_\mathrm{sc}}(v,w)} \\
    &\leq 2\Big(\frac{3+\nu}{2}\Big)^{\mathrm{p} (L-1)} \cdot \prod_{l=1}^L \beta_l \cdot d_{G_\mathrm{sc}}(v,w).
\end{align}
Recalling that $\mathrm{ilr}^{-1}: (\mathbb{R}^{\m-1},\|\cdot\|_2) \to (\Delta^{\circ}_{\m},\|\cdot\|_A)$ is an isometry, \eqref{finalop2} then gives, for $v\not= w\in\Gamma$ and $h\in\mathcal{H}$,
\begin{align} \label{LipGCN}
    \nonumber \|\pi_v\circ h(\mathbf{x}) - \pi_w\circ h(\mathbf{x})\|_A &\leq (\m-1)^{1/2} \|\pi_v\circ f_\mathrm{GCN}(\Gamma,\mathbf{x}) - \pi_w\circ f_\mathrm{GCN}(\Gamma,\mathbf{x})\|_{\infty} \\
    &\leq 2(\mathtt{m}-1)^{1/2}\Big(\frac{3+\nu}{2}\Big)^{\mathrm{p} (L-1)} \cdot \prod_{l=1}^L \beta_l \cdot d_{G_\mathrm{sc}}(v,w).
\end{align}
Thus, taking $\mathtt{C}_{\mathcal{H}}$ to be the constant multiplying $d_{G_\mathrm{sc}}(v,w)$ in \eqref{LipGCN} proves part (iii).
\end{proof}

\begin{remark} \label{rem:restrict}
We estimate $\|\Delta_{\Gamma}\|_{\op}$ by an upper bound of $\|\Delta_{G_\mathrm{sc}}\|_{\op}$. However, this entails no substantive loss as $\|P_{G_\mathrm{sc}}\|_{\op}, \|\Phi_{G_\mathrm{sc}}\|_{\op}\leq 1$, and the maximal diagonal entry of $\Phi_{G_\mathrm{sc}}$ can be verified to be $\phi(\mathtt{r})$. 
Moreover, by \eqref{unbalance}, the maximum determining the value of $\|P_{G_\mathrm{sc}}^{\top}\|_{\op}$ is attained at a computation node in $\Gamma$.
\end{remark}

With parts (i), (ii), (iii) established, this completes the proof of Proposition~\ref{prop:est}. 
\end{proof}


\end{document}